\documentclass[5p,twocolumn]{elsarticle}
\usepackage{amsmath,amssymb,amsthm}
\usepackage{booktabs}
\usepackage{xcolor}
\usepackage{algorithm}
\usepackage{algpseudocode}

\newtheorem{theorem}{Theorem}
\newtheorem{remark}{Remark}

\newcommand{\R}{\mathbb{R}}

\journal{Computer Methods in Applied Mechanics and Engineering}

\begin{document}

\begin{frontmatter}

\title{Convex Neural Energy Elements: Monolithic Finite-Element Assembly of
Geometry-Parameterized Neural Operators with Stability and Error
Guarantees}

\author[a,b]{Hongyue Jiang}
\author[c,b]{Jianjiang Zhan}
\author[a,b]{Chenzhuo Zhang}
\author[a,b]{Fan Wang\corref{cor1}}
\cortext[cor1]{Corresponding author.}
\ead{wang\_fan@alumni.hust.edu.cn}

\affiliation[a]{organization={School of Civil and Hydraulic Engineering,
Huazhong University of Science and Technology},
city={Wuhan}, postcode={430074}, country={China}}
\affiliation[b]{organization={National Center of Technology Innovation for
Digital Construction, Huazhong University of Science and Technology},
city={Wuhan}, postcode={430074}, country={China}}
\affiliation[c]{organization={School of Artificial Intelligence and Automation,
Huazhong University of Science and Technology},
city={Wuhan}, postcode={430074}, country={China}}

\begin{abstract}
Extending the neural-operator element method from individually trained,
fixed-geometry neural elements to a \emph{library} of reusable,
geometry-parameterized element types fails structurally: a field-predicting
operator trained by value
regression induces an energy whose assembled Hessian is indefinite, and
Newton converges to spurious minima (247\% error) even with 1\%-accurate
field predictions. We introduce \emph{convex neural energy elements}: each
element exports a scalar energy $E(g,U)$, architecturally convex in its
boundary degrees of freedom $U$ and smoothly parameterized by its geometry
$g$, realized as a hypernetwork-generated positive-semidefinite quadratic
form (an input-convex correction is reserved for non-quadratic physics).
A regularization-nullspace principle---the regularizer's nullspace must
contain the physics nullspace---removes an otherwise irreducible bias, and
assembled elements inherit the classical guarantee that singular element
stiffnesses yield a positive-definite global system. We prove conditional
error bounds (energy-to-solution accuracy, element-count scaling, geometry
generalization) and verify each experimentally. On heat conduction with
elliptic holes, one trained element assembles into $2\times2$ to
$8\times8$ grids and an L-shaped layout of unseen geometries at
$0.6$--$1.0\%$ relative $L^2$ error, with $175\times$ faster per-geometry
setup for boundary-quantity workloads. A second trained element type
mixes freely with the first in one monolithic assembly, and a
three-dimensional instantiation reaches $0.23\%$ on eight-element
assemblies---the guarantees are type- and dimension-agnostic. A
plane-strain elasticity element, whose physics nullspace is
three-dimensional, lands on the analytically predicted regularization
floors. Making the energy the learned object
turns neural operators from single-use surrogates into reusable elements
that inherit the assembly guarantees of the method they extend.
\end{abstract}

\begin{keyword}
neural operators \sep finite element method \sep operator learning \sep
input-convex neural networks \sep static condensation \sep
surrogate modeling
\end{keyword}

\end{frontmatter}

\section{Introduction}

Operator learning~\citep{lu2021deeponet,li2021fno,kovachki2023neural}
promises reusable surrogates for the expensive inner loops
of computational mechanics. Two recent lines of work frame the opportunity.
The neural-operator element method (NOEM)~\citep{ouyang2026noem} embeds a
pre-trained neural operator (NO) as a single macro-element inside a standard
finite-element (FE) solve, so that a subdomain that would require thousands of
elements is represented by one neural-operator element (NOE). DIMON
\citep{yin2024dimon} learns solution operators over a diffeomorphic family of
domains, demonstrating geometry generalization from a reference domain.
Combining the two suggests an appealing target: a \emph{neural isoparametric
element library} (NIEL)---reference cell, learned shape behavior, and explicit
geometry parameters, mirroring the classical isoparametric
element~\citep{hughes2000fem,zienkiewicz2013fem}---in which
one trained element type is instantiated many times with different geometric
parameters and assembled monolithically, so that a new problem requires
choosing elements and writing a mesh, not retraining networks.

This paper shows that the direct route to that target fails for a structural
reason, identifies the mathematical property that repairs it, and delivers the
repaired construction with theory and experiments.

\paragraph{The failure.} An NOE trained by \emph{value regression}---match the
predicted field to FEM solutions---can be an excellent field predictor and
still be unusable as an assembly building block. Assembly solvers use the
element's internal force $q=\partial E/\partial U$ and tangent stiffness
$K=\partial^2 E/\partial U^2$, i.e., derivatives of the induced energy, and
value training constrains neither the location of the energy's minimum nor the
signature of its Hessian. In our benchmark (Section~\ref{sec:failure}), a
value-trained element predicts interior fields to $0.7$--$1.3\%$ yet its
assembled tangent has 36 negative eigenvalues \emph{at the true solution};
Newton "converges" (step norms below $10^{-6}$) to a spurious critical point
with 247\% solution error. Accuracy of the field predictor and correctness of
the induced variational structure are separate properties; only the first is
controlled by standard training.

\begin{figure*}[t]
\centering
\includegraphics[width=\textwidth]{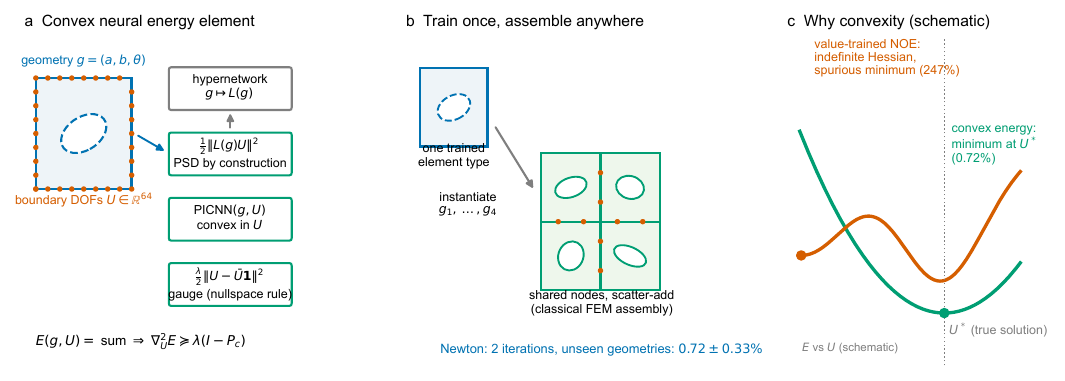}
\caption{Convex neural energy elements at a glance.
(a)~Each element carries a scalar energy $E(g,U)$ of the element's boundary
degrees of freedom $U$, smoothly parameterized by its geometry $g$: a
hypernetwork-generated quadratic head (positive semidefinite by
construction), a partially input-convex correction, and a gauge Tikhonov
regularizer whose nullspace contains the physics nullspace. The sum is
architecturally convex in $U$ (Theorem~\ref{thm:psd}).
(b)~One trained element type is instantiated at different geometries and
assembled monolithically by classical scatter-add over shared boundary
nodes; on unseen geometries the assembly reaches $0.72\pm0.33\%$ relative
$L^2$ error with two Newton iterations (Section~\ref{sec:experiments}).
(c)~The reason convexity is the admissibility condition: a value-trained
field-predicting element induces an energy with an indefinite Hessian and a
spurious minimum to which Newton converges (247\% error), whereas the
convex energy has its minimum at the true solution
(Section~\ref{sec:failure}). Panels (a) and (c) are schematic; all numbers
are measured.}
\label{fig:overview}
\end{figure*}

\paragraph{The repair.} We make the energy itself the learned object and make
its convexity architectural (Figure~\ref{fig:overview}). A \emph{convex
neural energy element} carries
\begin{equation}
E(g, U) \;=\; \tfrac12 \|L(g)\,U\|^2 \;+\;
\mathrm{PICNN}(g, U) \;+\; \tfrac{\lambda}{2}\,\|U - \bar U \mathbf{1}\|^2,
\label{eq:element}
\end{equation}
where $U\in\R^{n}$ are the element's boundary degrees of freedom, $g$ its
geometry parameters, $L(g)$ a hypernetwork-generated matrix (so the quadratic
part $L^\top L\succeq 0$ for every $g$), $\mathrm{PICNN}$ a partially
input-convex network~\citep{amos2017icnn} (convex in $U$, arbitrary in $g$)
that carries non-quadratic physics, and the last term a \emph{gauge} Tikhonov
regularizer acting only on the deviatoric part of $U$
($\bar U$ is the mean). The demonstrated, quantitatively validated scope
of this paper is the quadratic instantiation ($\mathrm{PICNN}$ off): all
linear self-adjoint benchmarks deploy the first and third terms alone.
The correction term is exercised in the nonlinear extension
(Section~\ref{sec:nonlinear}), where the ansatz is proved lossless; a
reduced-basis convex network already learns the correction well enough to
outperform the no-learning physics base threefold, while reaching the
accuracy of the training-free physics ladder remains an identified open
problem. Two structural choices matter:

\emph{(i) Nullspace principle.} The tile's condensed physical energy is
invariant under constant shifts (pure-Neumann condensation), so its Hessian
has a nullvector $\mathbf{1}$. A full Tikhonov term $\lambda\|U\|^2/2$ forces
the model's curvature to exceed $\lambda$ in that direction, which no convex
architecture can cancel; we show analytically that this imposes an
irreducible energy-error floor proportional to $\lambda$, and measure the
model sitting exactly on that floor across three decades of $\lambda$
(Section~\ref{sec:ablations}). The gauge regularizer's nullspace contains the
physics nullspace and removes the floor entirely. The principle---
\emph{the regularizer nullspace must contain the physics nullspace}---is
general, and we exercise it in three regimes: the scalar benchmark (1-dim
nullspace, constants), a plane-strain elasticity instantiation (3-dim
nullspace, rigid modes, where a regularizer missing \emph{only the
rotation} already incurs a measurable floor; Section~\ref{sec:elast}), and
the nonlinear benchmark (empty nullspace: the reaction term pins
constants, and an ordinary small Tikhonov term is again admissible).

\emph{(ii) Classical assembly semantics.} With the gauge regularizer each
element stiffness is positive semidefinite with (at most) the admitted
physics nullspace---constants here, rigid modes in elasticity---exactly
like a classical element stiffness with a rigid mode.
Positive definiteness of the assembled system is then supplied by
connectivity and Dirichlet data, the same mechanism as in classical FEM
(Theorem~\ref{thm:global}); neural elements inherit the assembly theory of
the method they extend.

\paragraph{Contributions.}
\begin{enumerate}
\item \textbf{Diagnosis.} A controlled demonstration that value-trained
neural operators fail as assembly elements for structural reasons (indefinite
induced Hessians, spurious Newton fixed points), isolated from all other error
sources by an exact-condensation control with 0.07\% assembly error.
\item \textbf{Construction.} Convex neural energy elements
\eqref{eq:element}: geometry-parameterized, architecturally convex,
regularized by the nullspace principle, assembled monolithically with
classical FEM semantics; plus an analytic-Hessian fast path for the quadratic
case.
\item \textbf{Theory.} Quantitative bounds: energy accuracy $\epsilon$
controls solution error ($\|U^*_{NN}-U^*\|^2 \le 8\epsilon/\mu$), tile-count
scaling at most linear in $N$, architectural positive-semidefiniteness, and
geometry-generalization with linear dependence on the distance to the
training family---each stated with measurable constants and verified
point-by-point in experiments (all nine model-quality points within the
Theorem~\ref{thm:t1} bound; observed tile-scaling exponent 0.13 vs.\ bound
1, measured up to 64 elements; observed geometry exponent 0.9, matching the
linear-in-$d$ bound). The
nullspace principle and the assembly theorem are additionally exercised on
a second physics---plane-strain elasticity---whose nullspace is
three-dimensional (Section~\ref{sec:elast}).
\item \textbf{Toward an element library.} One trained element type covers a
3-parameter geometry family: assemblies of $2$--$64$ instances at unseen
geometries---rectangular grids and an L-shaped topology with a re-entrant
corner---reach $0.6$--$1.0\%$ mean error ($0.72\pm0.33\%$ over 50
four-element runs, $0.40\pm0.09\%$ on the L-shape), Newton amounting to
one sparse solve plus a verification step;
per-new-geometry setup is $215\,\mu$s vs.\ 38\,ms for mesh-and-condense; the
element outperforms \emph{local} operator interpolation
\citep{amsallem2011online} with identical anchors by $5$--$8\times$ (in
both the stiffness and the square-root variable) and a whole-domain
surrogate with $4\times$ more training data by $11\times$, while a global
polynomial fit of the same square-root target matches it---evidence that
the guarantees come from the target and its convex reconstruction, not
from the network (Section~\ref{sec:ablations}). A second element type (a
two-hole topology with its own geometry space and hypernetwork, trained by
the same recipe) mixes freely with the first in one monolithic assembly at
$0.46\pm0.11\%$---the assembly guarantee is type-agnostic, and adding a
type costs one training run (Section~\ref{sec:hetero}). A 3D
instantiation (cube elements, ellipsoidal voids, 386 interface nodes)
assembles eight unseen geometries at $0.23\pm0.06\%$ with the identical
recipe (Section~\ref{sec:3d}).
Boundary-condition robustness is structural: far
out-of-distribution boundary data (ramps, plateaus) incur no measurable
degradation, in contrast to field-predicting NOEs.
\item \textbf{Beyond quadratic energies.} For a nonlinear
reaction--diffusion benchmark whose condensed element energy is convex but
non-quadratic: (i) assembling \emph{exactly} condensed nonlinear energies
reproduces the monolithic nonlinear solve to machine precision---the
assembly framework transfers without modification; (ii) the
interior-mediated energy correction is provably convex (Schur-complement
monotonicity), so the convex-element ansatz loses nothing in principle;
(iii) a training-free ladder of physics-structured convex trial energies,
made variationally consistent by differentiating through the interior
Newton refinement, reaches $0.040\%$ at two refinement steps and
condensation-level accuracy at three, at Newton-iteration parity with
exact condensation (ignoring the nonlinearity costs 25.7\%); and (iv) a
reduced-basis convex-network correction, trained gradient-first on the
principal curvature directions of the remainder, is the first learned
model to outperform the no-learning physics base ($4.9\%$ vs.\
$14.6\%$)---though it still trails the physics ladder. We identify the
near-singular anisotropic curvature of the correction as the remaining
obstacle and leave fully learned, geometry-parameterized nonlinear
corrections as a precisely-posed open problem.
\end{enumerate}

\section{Background and failure diagnosis}\label{sec:failure}

\subsection{Setting and exact element energies by static condensation}
We consider self-adjoint problems whose discrete solution minimizes an
energy: find $U^*=\arg\min_U \Pi(U)$ with
$\Pi(U)=\sum_{e} E^{(e)}(U|_e) - F^\top U$, the classical variational
structure that FEM assembly and Newton solvers presuppose
\citep{hughes2000fem}. The running linear example is heat conduction,
$-\Delta T=0$ on a domain assembled from square tiles of size $0.8$, each
perforated by an elliptic hole with geometry $g=(a,b,\theta)$ (semi-axes and
rotation), insulated hole boundary, and Dirichlet data on the outer
boundary. Table~\ref{tab:notation} collects the notation used throughout.

\begin{table}[t]
\centering\footnotesize
\caption{Notation.}
\label{tab:notation}
\setlength{\tabcolsep}{3pt}
\begin{tabular}{ll}
\toprule
Symbol & Meaning \\
\midrule
$g\in\mathcal G$ & element geometry parameters ($(a,b,\theta)$ here) \\
$U\in\R^n$ & element boundary degrees of freedom \\
$S_g$ & exact condensed (Schur) stiffness of tile $g$ \\
$E_{\mathrm{FEM}},E_\theta$ & exact / learned element energy \\
$L(g)$ & hypernetwork-generated factor, $K=L^\top L$ \\
$\lambda$ & regularization strength \\
$P_c,\,P_{\mathrm{rigid}}$ & projectors onto the physics nullspace \\
$\mathcal N_e$, $P_e$ & element nullspace; element-to-global scatter \\
$\mathbf 1$ & constant vector (scalar-physics nullspace) \\
$q(U),K(U)$ & element internal force / tangent stiffness \\
$\epsilon$ & energy accuracy $|E_\theta-E_{\mathrm{FEM}}|$ \\
$\mu,\mu_{\mathrm{glob}}$ & strong-convexity constants (element / assembled) \\
$N$ & number of assembled elements \\
$\beta$ & reaction coefficient (nonlinear benchmark) \\
$\eta$ & Dirichlet penalty weight ($10^6$) \\
\bottomrule
\end{tabular}
\end{table}

A tile's interaction with the rest of the model passes exclusively through
its $n=64$ boundary nodes. Partitioning the tile's P1 stiffness into
boundary (b) and interior (i) blocks and minimizing out the interior---
static condensation \citep{guyan1965reduction,craig1968coupling}---gives
the exact discrete tile energy as a quadratic form in the boundary values,
\begin{equation}
E_{\mathrm{FEM}}(g,U)=\min_{U_i}
\tfrac12\bigl(\begin{smallmatrix}U\\U_i\end{smallmatrix}\bigr)^{\!\top}
\!\bigl(\begin{smallmatrix}K_{bb}&K_{bi}\\K_{ib}&K_{ii}\end{smallmatrix}\bigr)\!
\bigl(\begin{smallmatrix}U\\U_i\end{smallmatrix}\bigr)
=\tfrac12\,U^\top S_g\,U,
\label{eq:schur}
\end{equation}
with $S_g=K_{bb}-K_{bi}K_{ii}^{-1}K_{ib}$ computed on a per-geometry mesh
(interior $\sim$550 nodes). Assembling the $S_g$ of adjacent tiles by
scatter-add and minimizing is \emph{exactly equivalent} to solving the
monolithic fine problem: condensation is energy-preserving, which is what
makes $S_g$ (rather than any field prediction) the correct learning target
for an element. $S_g$ is symmetric positive semidefinite with the
single nullvector $\mathbf{1}$ (pure-Neumann invariance under constant
shifts) and second eigenvalue $\lambda_2(S_g)\in[0.02,0.06]$ over the
geometry family $\mathcal G=[0.15,0.35]^2\times[0,\pi)$.

\subsection{Exact-condensation control}
Before analyzing learned elements we validate the assembly machinery itself:
assembling the \emph{exact} matrices $S_g$ for four different tiles into a
$1.6\times1.6$ domain and solving reproduces an independently meshed FEM
reference to $0.067\%$ mean relative $L^2$ (5 boundary-condition seeds), with
the Newton loop matching a direct solve to machine precision. All subsequent
error is therefore attributable to the learned energy alone.

\subsection{Why value-trained elements fail}
We take a DIMON-style field-predicting NOE trained to 0.34\% test error
(value regression over the same geometry family, 1{,}500 training samples
including boundary-data augmentation matched to multi-tile deployment; its
training budget, architecture width, and data generator match those used
for our elements, and its single-tile accuracy of 1.85\% mean is the
strongest we obtained for this family across all field-predicting variants
we trained) and assemble four instances. The induced energy's assembled
tangent, evaluated \emph{at the true solution}, has 36 negative eigenvalues;
started from zero the Newton loop reports converged step norms while landing
at 247\% error, and started from the truth it diverges. The failure is
structural, not an artifact of one training run: three further networks
trained from independent seeds (identical data, architecture, and budget;
single-tile field accuracy $0.32$--$0.36\%$) carry $31$--$44$ negative
eigenvalues at the true solution, land at $241$--$291\%$ error from zero
initialization, and, warm-started \emph{at the exact solution}, drift away
by three to six orders of magnitude within three Newton steps---every
seed, no exception. Value training
controls $\mathrm{NN}(BC,x)\approx T(x)$ pointwise but not
$\nabla_U E$ or $\nabla^2_U E$; the energy induced through the network's
predictions has neither its minimum at the true solution nor a
positive-semidefinite Hessian. This failure mode was reported as an open
obstacle in our earlier experiments and, we believe, explains why
NOEM deployments to date use a \emph{single} NOE coupled to standard FE
elements whose stiffness dominates the assembled tangent.

\section{Convex neural energy elements}\label{sec:method}

\subsection{Element definition}
A NIEL element type is a triple (reference cell topology, geometry space
$\mathcal G$, trained energy $E_\theta$), with $E_\theta$ as in
\eqref{eq:element}. The element exposes the standard FE contract
$q(U)=\nabla_U E_\theta$, $K(U)=\nabla^2_U E_\theta$ (by automatic
differentiation, or analytically in the quadratic case:
$K(g)=L(g)^\top L(g)+\lambda(I-P_c)$ with $P_c=\mathbf{1}\mathbf{1}^\top/n$).
Geometry enters through the smooth encoding
$[a,b,\cos2\theta,\sin2\theta]$ (the $2\theta$ frequency respects the
ellipse's $\pi$-periodicity); the hypernetwork is a plain MLP mapping this
encoding to the $n(n{+}1)/2$ entries of a symmetric $L(g)$.

Deployment (Algorithm~\ref{alg:assemble}) is classical FEM assembly:
element tangents and internal forces are scatter-added over shared boundary
nodes, Dirichlet data is imposed by a quadratic penalty
$\tfrac\eta2\sum_{j\in\mathcal D}(U_j-\bar u_j)^2$ with $\eta=10^6$ (any
standard alternative---elimination or Lagrange multipliers---works equally;
the penalty keeps the energy interpretation intact), and the assembled
convex problem is solved by Newton's method. For quadratic elements the
analytic-Hessian fast path makes each Newton step a single sparse solve and
the iteration converges in two steps; no network weights are updated at
deployment time.

\begin{algorithm}[t]
\caption{Deployment: monolithic assembly and solve}
\label{alg:assemble}
\begin{algorithmic}[1]
\Require trained element $E_\theta$; layout of $N$ tiles with geometries
$\{g_e\}$; global/local index maps $\{P_e\}$; Dirichlet data
$(\mathcal D,\bar u)$; penalty $\eta$
\For{$e=1,\dots,N$} \Comment{setup: one forward pass per geometry}
  \State $L_e \gets \mathrm{Hypernet}_\theta(g_e)$;\quad
         $K_e \gets L_e^\top L_e + \lambda(I-P_c)$
\EndFor
\State $U \gets 0$
\Repeat \Comment{Newton on the assembled convex energy}
  \State $K \gets \sum_e P_e^\top K_e(U|_e)\,P_e$;\quad
         $q \gets \sum_e P_e^\top q_e(U|_e)$
  \State add $\eta(U_j-\bar u_j)$ and $\eta$ to $q$ and $\mathrm{diag}(K)$
         for $j\in\mathcal D$
  \State solve $K\,\delta = -q$ (sparse); \quad $U \gets U + \delta$
\Until{$\|\delta\| < 10^{-10}$}
\State \Return $U$
\end{algorithmic}
\end{algorithm}

\subsection{Training}
For quadratic physics the energy is learned by \emph{operator regression}
(Algorithm~\ref{alg:train}):
the hypernetwork target is the symmetric square root
$M_g=\mathrm{sqrtm}\!\left(S_g-\lambda(I-P_c)\right)$, well defined precisely
when $\lambda\le\lambda_2(S_g)$---an explicit, checkable admissibility
condition for the regularization strength (pool minimum $0.0205$; we use
$\lambda=0.005$).

\begin{algorithm}[t]
\caption{Element training by operator regression (quadratic physics)}
\label{alg:train}
\begin{algorithmic}[1]
\Require geometry pool $\{g_k\}_{k=1}^{400}\subset\mathcal G$; physics
nullspace projector $P$ ($P_c$ or $P_{\mathrm{rigid}}$); strength $\lambda$
\For{$k=1,\dots,400$} \Comment{one-time target generation}
  \State mesh tile $g_k$; assemble $K$; condense
         $S_k \gets K_{bb}-K_{bi}K_{ii}^{-1}K_{ib}$ \hfill (Eq.~\ref{eq:schur})
  \State \textbf{assert} $\lambda \le \lambda_{\dim\ker+1}(S_k)$
         \Comment{admissibility check}
  \State $M_k \gets \mathrm{sqrtm}\big(S_k-\lambda(I-P)\big)$
\EndFor
\State train $\theta$:
$\min_\theta \sum_k \|\,\mathrm{sym}(\mathrm{Hypernet}_\theta(g_k)) - M_k\|_F^2$
\Statex \hspace{\algorithmicindent} (Adam, 20k steps, then L-BFGS polish;
65\,s wall-clock)
\State \Return $E_\theta(g,U)=\tfrac12\|L(g)U\|^2+\tfrac\lambda2\|(I-P)U\|^2$,
$L(g)=\mathrm{sym}(\mathrm{Hypernet}_\theta(g))$
\end{algorithmic}
\end{algorithm} This is markedly better conditioned than regressing
energies sampled at states: with 400 training geometries the hypernetwork
reaches a matrix mean-squared error of $5.5\times10^{-6}$ (relative
$2.7\times10^{-4}$, i.e., $1.6\%$ per-entry RMS). This residual sits below
the measured discretization sensitivity of the targets themselves:
regenerating $S_g$ with the interior mesh density perturbed by $\pm10\%$
changes the matrices by $3.5$--$7.3\%$ in Frobenius norm, so the network
fits the smooth cross-geometry structure at a level finer than any single
mesh realization is trustworthy. At an equal training budget, direct regression of sampled
energies onto the same architecture reaches an energy accuracy roughly
$2{,}500\times$ worse (manifold-restricted $\epsilon^2$ of $3\times10^{-1}$
vs.\ $1.2\times10^{-4}$) and assembles at 153.7\% error
(Section~\ref{sec:ablations})---sampled scalar energies are a poorly
conditioned supervision signal for operator-grade accuracy, which
retrospectively explains part of the difficulty of the naive approach.
For non-quadratic physics the PICNN correction is trained with a Sobolev
loss on energies and boundary forces, the latter available at no extra cost
via the envelope theorem applied to interior condensation
(Section~\ref{sec:nonlinear}).

\subsection{Theory}\label{sec:theory}
Write $\mu$ for the strong-convexity constant of the assembled energy on the
constrained subspace and $\epsilon$ for the energy accuracy
$|E_\theta - E_{\mathrm{FEM}}|$ on the relevant states. Full statements and
proofs are in the appendix; we summarize the chain.

\begin{theorem}[Energy accuracy controls solution accuracy]\label{thm:t1}
If $E_\theta$ is $\mu$-strongly convex on the constrained subspace and
$|E_\theta-E_{\mathrm{FEM}}|\le\epsilon$ at the true and model solutions,
then $\|U^*_\theta-U^*\|^2 \le 8\epsilon/\mu$.
\end{theorem}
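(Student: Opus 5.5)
The argument is the standard strong-convexity perturbation bound. We compare the two assembled energies
\[
\Pi_\theta(U)=\sum_e E_\theta^{(e)}(U|_e)-F^\top U+\text{penalty},
\]
and $\Pi_{\mathrm{FEM}}$, defined in the same way with $E_{\mathrm{FEM}}$. Their minimizers $U^*_\theta$ and $U^*$ lie in the same constrained affine set. We read $\epsilon$ as a bound on $|\Pi_\theta-\Pi_{\mathrm{FEM}}|$ at both points. The load term and the penalty are identical in the two energies, so this difference reduces to the difference of element energies stated in the hypothesis.

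\textbf{Step 1: quadratic growth of $\Pi_\theta$ at its minimizer.} The energy $\Pi_\theta$ is $\mu$-strongly convex on the constrained subspace, and $U^*_\theta$ is its minimizer there. The first-order term vanishes along feasible directions, so
\[
\Pi_\theta(U^*)\ \ge\ \Pi_\theta(U^*_\theta)+\tfrac{\mu}{2}\|U^*-U^*_\theta\|^2 .
\]

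\textbf{Step 2: sandwich with the FEM energy.} We chain three inequalities. Closeness at $U^*$ gives $\Pi_\theta(U^*)\le \Pi_{\mathrm{FEM}}(U^*)+\epsilon$. Optimality of $U^*$ for $\Pi_{\mathrm{FEM}}$ gives $\Pi_{\mathrm{FEM}}(U^*)\le\Pi_{\mathrm{FEM}}(U^*_\theta)$. Closeness at $U^*_\theta$ gives $\Pi_{\mathrm{FEM}}(U^*_\theta)\le \Pi_\theta(U^*_\theta)+\epsilon$. Together,
\[
\Pi_\theta(U^*)-\Pi_\theta(U^*_\theta)\le 2\epsilon .
\]
Combining this with Step 1 yields $\|U^*_\theta-U^*\|^2\le 4\epsilon/\mu$, which is stronger than the stated $8\epsilon/\mu$.

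\textbf{Step 3: robust variant.} The stated constant $8$ presumably leaves slack for a weaker setting, for instance when the nearly exact FEM energy is the strongly convex one, or when each element's accuracy is counted separately. A variant that uses strong convexity only through the growth inequality, and drops the optimality comparison in Step 2, also stays within $8\epsilon/\mu$. If the appendix defines $\mu$ as the Hessian lower bound, so that growth reads $\tfrac{\mu}{2}\|\cdot\|^2$, the $4\epsilon/\mu$ bound above suffices either way.

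\textbf{Main obstacle: the gauge nullspace.} The element energies are only semidefinite. Strong convexity therefore has to be asserted on the constrained subspace, after Dirichlet data or connectivity removes the constants, which is the mechanism of Theorem~\ref{thm:global}. I would state explicitly that both minimizers lie in the same affine set on which the growth inequality holds. I would also state that the penalty formulation makes this the whole space, with $\mu$ its smallest Hessian eigenvalue. This step needs care; everything else is routine.
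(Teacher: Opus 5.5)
Your Steps 1--2 are exactly the paper's proof: quadratic growth of the $\mu$-strongly convex model energy at its constrained minimizer, combined with the three-term decomposition through the reference energy (two $\epsilon$-gaps plus a nonpositive optimality term). This gives $\|U^*_\theta-U^*\|^2\le 4\epsilon/\mu$, and the paper likewise treats the stated factor $8$ as slack. Your Step~3 is unnecessary, and its claim that the optimality comparison can be dropped is unsupported, since optimality of $U^*$ for $E_{\mathrm{FEM}}$ is the only information tying $U^*$ to the estimate; it does not affect the main argument.
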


\begin{theorem}[Assembly, general element nullspaces]\label{thm:global}
Let each element energy be convex with
$\nabla^2_U E_e \succeq \lambda\,(I-P_{\mathcal N_e})$, where
$P_{\mathcal N_e}$ projects onto a finite-dimensional admitted nullspace
$\mathcal N_e$, and let $P_e$ denote the element-to-global scatter map. If
the only global vector $V$ with $P_eV\in\mathcal N_e$ for every $e$ and
$V=0$ on the Dirichlet set is $V=0$, then the assembled energy is strictly
convex on the constrained subspace; with $N$ elements,
$\|U^*_\theta-U^*\|^2\le 4N\epsilon/\mu_{\mathrm{glob}}$.
\end{theorem}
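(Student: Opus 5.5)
The plan has two parts. First we show strict convexity by a kernel argument on the assembled Hessian. Then we obtain the error bound by applying Theorem~\ref{thm:t1} to the assembled problem, with the energy error summed over elements.

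\textbf{Strict convexity.} The assembled constrained energy is
$\Pi_\theta(U)=\sum_e E_e(P_eU)-F^\top U$, restricted to the affine set where $U$ equals the Dirichlet data. Its tangent directions are the vectors $V$ with $V=0$ on the Dirichlet set. For any $U$ and any such $V$,
\[
V^\top\nabla^2\Pi_\theta(U)\,V=\sum_e (P_eV)^\top\nabla^2E_e\,(P_eV)\;\ge\;\lambda\sum_e\|(I-P_{\mathcal N_e})P_eV\|^2 .
\]
Each term is nonnegative, so the right-hand side vanishes only if $P_eV\in\mathcal N_e$ for every $e$. By hypothesis this forces $V=0$. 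The quadratic form $Q(V)=\sum_e\|(I-P_{\mathcal N_e})P_eV\|^2$ is therefore positive definite on the finite-dimensional constrained subspace. By compactness of the unit sphere in that subspace, it has a minimum $c>0$ there. Hence $\nabla^2\Pi_\theta\succeq \lambda c\,I$ uniformly in $U$, which gives strict and in fact strong convexity with $\mu_{\mathrm{glob}}\ge\lambda c$.

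The lower bound does not depend on $U$ because the hypothesis $\nabla^2E_e\succeq\lambda(I-P_{\mathcal N_e})$ is assumed at every state. This is the step that mirrors classical FEM: rigid modes of individual elements are killed by connectivity and Dirichlet data.

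\textbf{Error bound.} Reading $\epsilon$ as the per-element bound $|E_e-E_{\mathrm{FEM},e}|\le\epsilon$ at the true and model solutions, the triangle inequality gives $|\Pi_\theta-\Pi_{\mathrm{FEM}}|\le N\epsilon$ at $U^*$ and $U^*_\theta$. The linear load term is identical in both energies and cancels. We then rerun the argument of Theorem~\ref{thm:t1} with $\mu_{\mathrm{glob}}$ and obtain the sharper constant $4$ rather than $8$.

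Strong convexity and optimality of $U^*_\theta$ give
\[
\tfrac{\mu_{\mathrm{glob}}}{2}\|U^*-U^*_\theta\|^2\le\Pi_\theta(U^*)-\Pi_\theta(U^*_\theta).
\]
The first-order term drops because $U^*_\theta$ is the constrained minimizer and $U^*-U^*_\theta$ is a constrained direction. Next we compare with the true energy:
\[
\Pi_\theta(U^*)\le\Pi_{\mathrm{FEM}}(U^*)+N\epsilon\le\Pi_{\mathrm{FEM}}(U^*_\theta)+N\epsilon\le\Pi_\theta(U^*_\theta)+2N\epsilon,
\]
where the middle inequality uses that $U^*$ minimizes $\Pi_{\mathrm{FEM}}$. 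Combining the two displays gives $\|U^*_\theta-U^*\|^2\le 4N\epsilon/\mu_{\mathrm{glob}}$.

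\textbf{Expected obstacle.} The main difficulty is bookkeeping rather than depth. The constant in Theorem~\ref{thm:t1} ($8$) apparently uses a looser route, so the tighter constant $4$ must come from the direct argument above, which only uses strong convexity of the model energy. We must also make sure the shared-node scatter does not double count the energy error. It does not, because the errors are summed per element and each element contributes at most $\epsilon$, giving the linear-in-$N$ factor.
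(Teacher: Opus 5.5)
Your proposal is correct and follows the paper's proof essentially step for step: a kernel argument on the lower-bound form $\lambda\sum_e P_e^\top(I-P_{\mathcal N_e})P_e$ gives $\mu_{\mathrm{glob}}>0$, and then Theorem~\ref{thm:t1}'s three-term comparison is applied with the per-element gaps summed to $N\epsilon$. The differences are cosmetic. The paper imposes the Dirichlet data through the penalty term $\eta\,\Pi_\partial$ and takes $\mu_{\mathrm{glob}}=\lambda_{\min}(Q)$ on the full space, where you restrict to the affine constrained subspace. Also, the paper's own proof of Theorem~\ref{thm:t1} already yields the constant $4$ (its $8$ is only a bookkeeping convention), so no tighter route than the paper's is actually needed.
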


Two corollaries instantiate the compatibility condition (proofs in the
appendix). \emph{(i)~Scalar diffusion}, $\mathcal N_e=\mathrm{span}\{\mathbf
1\}$: a connected element adjacency graph plus one pinned node suffice,
since a shared node forces neighboring element constants to agree; the
condition is per-element, so mixtures of element types
(Section~\ref{sec:hetero}) are covered with no further argument.
\emph{(ii)~Plane-strain elasticity}, $\mathcal N_e=\mathrm{span}\{r_x,r_y,
r_\theta\}$ (rigid modes): adjacent elements sharing at least two distinct
nodes force their infinitesimal rigid motions to coincide, and Dirichlet
data pinning two or more distinct nodes (a boundary segment in all our
experiments) annihilates the remaining global rigid motion.

\begin{theorem}[Architectural PSD]\label{thm:psd}
Every energy of the form \eqref{eq:element} has
$\nabla_U^2 E \succeq \lambda (I-P_c)$ for all $(g,U)$. Consequently every
Newton system on the assembled energy (with the compatibility condition of
Theorem~\ref{thm:global}) is uniquely solvable and the assembled problem
has a unique minimizer; for quadratic elements the iteration is one sparse
solve followed by a convergence check.
\end{theorem}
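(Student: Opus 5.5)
The plan is to compute the Hessian of \eqref{eq:element} term by term and then reduce the assembly claims to Theorem~\ref{thm:global}.

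\textbf{Quadratic head.} We have $\nabla_U^2\tfrac12\|L(g)U\|^2=L(g)^\top L(g)$, which is positive semidefinite for every $g$, since $V^\top L^\top L V=\|LV\|^2\ge0$. This holds whatever matrix the hypernetwork outputs, so no property of the trained weights is used.

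\textbf{PICNN term.} By the standard input-convex construction of \citep{amos2017icnn}, the map $U\mapsto\mathrm{PICNN}(g,U)$ is convex for each fixed $g$. The reasons are the nonnegative weights on the $z$-path and convex, nondecreasing activations; the $g$-path only shifts the biases and rescales with nonnegative gates.

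I take the activations to be smooth, e.g.\ softplus, so that the Hessian exists. Then convexity of a $C^2$ function gives $\nabla_U^2\mathrm{PICNN}\succeq0$. If ReLU activations are used, I would phrase the claim as convexity of $E-\tfrac\lambda2\|(I-P_c)U\|^2$ and work with generalized Hessians.

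\textbf{Gauge term.} We have $\|U-\bar U\mathbf 1\|^2=\|(I-P_c)U\|^2$, because $\bar U\mathbf 1=P_cU$. Since $I-P_c$ is a symmetric idempotent, the Hessian of $\tfrac\lambda2\|(I-P_c)U\|^2$ is $\lambda(I-P_c)^\top(I-P_c)=\lambda(I-P_c)$.

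Adding the three pieces gives $\nabla_U^2E\succeq\lambda(I-P_c)$ for all $(g,U)$.

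\textbf{Consequences for the assembled system.} This bound is exactly the hypothesis of Theorem~\ref{thm:global} with $\mathcal N_e=\mathrm{span}\{\mathbf 1\}$. Take Newton's method at an arbitrary iterate $U$. The assembled tangent $\sum_eP_e^\top\nabla^2E_e(U|_e)P_e$ plus the penalty term $\eta\sum_{j\in\mathcal D}e_je_j^\top$ satisfies
\[
V^\top KV\ \ge\ \lambda\sum_e\|(I-P_c)P_eV\|^2+\eta\sum_{j\in\mathcal D}V_j^2 .
\]
If the right-hand side vanishes, then each $P_eV$ is constant and $V=0$ on $\mathcal D$. The compatibility condition then forces $V=0$, so $K\succ0$. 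Since $V\mapsto V^\top KV$ is continuous and positive on the compact unit sphere, it has a positive minimum there; hence $K$ is positive definite with a uniform bound and each Newton system has a unique solution.

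The same bound, uniform in $U$, makes the assembled energy strongly convex. The subtlety is that strict convexity alone does not guarantee a minimizer exists; uniform strong convexity does, because it yields coercivity. So the assembled problem has a unique minimizer.

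\textbf{Quadratic case.} When the PICNN is off, the energy is exactly quadratic: $\Pi(U)=\tfrac12U^\top KU-b^\top U+\text{const}$ with constant $K$. One Newton step from any start solves $KU=b$ exactly. The next step then has $\delta=0$, which is the convergence check.

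\textbf{Main obstacle.} The only delicate step is the regularity and convexity of the PICNN in $U$ for every $g$. This depends on the architectural constraints (nonnegativity of the $U$-path weights enforced by reparametrization, and convex nondecreasing activations). I would state these as standing assumptions on the architecture and cite the inductive convexity argument of \citep{amos2017icnn}: a nonnegative combination of convex functions composed with a convex nondecreasing function remains convex.
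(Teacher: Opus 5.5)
Your proposal is correct and follows essentially the paper's own proof. It uses the same term-by-term Hessian bound: the PSD head $L^\top L$, convexity of the PICNN in $U$ from nonnegative $z$-path weights, positive geometry gates and convex nondecreasing softplus, and the gauge term contributing exactly $\lambda(I-P_c)$. It then applies the kernel argument of Theorem~\ref{thm:global} to the assembled tangent and makes the same one-solve-plus-check observation for quadratic elements.

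Your explicit coercivity argument for existence of the minimizer is a useful detail that the paper leaves implicit. One small correction: apply the compactness step to the $U$-independent lower form $\lambda\sum_e P_e^\top(I-P_c)P_e+\eta\,\Pi_\partial$ rather than to $K(U)$. Applied to $K(U)$ it only gives a positive bound at each iterate, whereas the $U$-independent form makes the constant uniform along the Newton path.
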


\begin{theorem}[Geometry generalization]\label{thm:t5}
Under Lipschitz dependence of $E_\theta$ and $E_{\mathrm{FEM}}$ on $g$,
$\|U^*_\theta-U^*\|^2(g_{\mathrm{test}}) \le 8\epsilon/\mu +
c\,d(g_{\mathrm{test}},\mathcal G_{\mathrm{train}})/\mu$ with
$c = 8(L_\theta+L_F)$; the observed exponent $\approx0.9$ matches this
linear-in-$d$ form.
\end{theorem}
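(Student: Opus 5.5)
The plan is to reduce the geometry-generalization statement to Theorem~\ref{thm:t1} by a triangle inequality in energy space. Fix $g_{\mathrm{test}}$ and choose a nearest training geometry $g_0\in\mathcal G_{\mathrm{train}}$, so that $|g_{\mathrm{test}}-g_0| = d(g_{\mathrm{test}},\mathcal G_{\mathrm{train}})=:d$; if the infimum is not attained, work with a near-minimizer and pass to the limit. Interpret $\epsilon$ as the energy accuracy on the training family, i.e.\ $|E_\theta(g_0,U)-E_{\mathrm{FEM}}(g_0,U)|\le\epsilon$ at the relevant states. Interpret the Lipschitz hypotheses as $|E_\theta(g,U)-E_\theta(g',U)|\le L_\theta|g-g'|$ and $|E_{\mathrm{FEM}}(g,U)-E_{\mathrm{FEM}}(g',U)|\le L_F|g-g'|$, uniformly over the bounded set of states containing the true and model solutions. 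Boundedness of that set follows from strong convexity and bounded Dirichlet data.

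\textbf{Step 1 (transfer of energy accuracy).} For $U\in\{U^*_\theta(g_{\mathrm{test}}),U^*(g_{\mathrm{test}})\}$, split the test-geometry error as
\[
|E_\theta(g_{\mathrm{test}},U)-E_{\mathrm{FEM}}(g_{\mathrm{test}},U)| \le L_\theta d + \epsilon + L_F d .
\]
The first term compares $E_\theta$ at $g_{\mathrm{test}}$ and $g_0$, the second compares $E_\theta$ and $E_{\mathrm{FEM}}$ at $g_0$, and the third compares $E_{\mathrm{FEM}}$ at $g_0$ and $g_{\mathrm{test}}$. This gives an effective accuracy $\epsilon' = \epsilon + (L_\theta+L_F)d$ at $g_{\mathrm{test}}$. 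For an assembly of several tiles, apply the same bound tile by tile and absorb the tile count into $\epsilon$ and the Lipschitz constants, as in Theorem~\ref{thm:global}.

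\textbf{Step 2 (invoke Theorem~\ref{thm:t1}).} Strong convexity of $E_\theta(g_{\mathrm{test}},\cdot)$ with constant $\mu$ on the constrained subspace is architectural by Theorem~\ref{thm:psd} together with the compatibility condition. With $\epsilon'$ in place of $\epsilon$, Theorem~\ref{thm:t1} yields
\[
\|U^*_\theta-U^*\|^2\le \frac{8\epsilon'}{\mu} = \frac{8\epsilon}{\mu}+\frac{8(L_\theta+L_F)\,d}{\mu},
\]
which is exactly the claimed bound with $c=8(L_\theta+L_F)$.

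\textbf{Main obstacle.} The delicate point is the uniformity of the hypotheses. First, $\mu$ must be taken uniform over the geometries considered. For the quadratic instantiation this follows from $\nabla^2_U E_\theta\succeq\lambda(I-P_c)$ plus the Dirichlet penalty, so $\mu$ is independent of $g$. Second, the Lipschitz constants must hold on the relevant states at $g_{\mathrm{test}}$, which are not the same states at which the training accuracy $\epsilon$ was certified at $g_0$. To handle this, I will state $\epsilon$ as a sup over a bounded state ball $B_R$ containing all minimizers for the admissible boundary data. For quadratic energies $\tfrac12U^\top K(g)U$, the Lipschitz constants are then $\tfrac12R^2\sup_g\|\partial_g K\|$, where the sup is taken over a neighborhood of $\mathcal G$. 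Smoothness of the hypernetwork gives this for $E_\theta$; smooth dependence of $S_g$ on the mesh-mapped geometry, via a diffeomorphic reference map, gives it for $E_{\mathrm{FEM}}$. I would make that radius explicit rather than leave it implicit in $L_\theta$ and $L_F$.
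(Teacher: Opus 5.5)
Your proposal is correct and is essentially the paper's proof. You pick the nearest training geometry $g^\star$ and use the Lipschitz bounds on $E_\theta$ and $E_{\mathrm{FEM}}$ in a three-term triangle inequality to get $\epsilon(g_{\mathrm{test}})\le\epsilon+(L_\theta+L_F)\,d$, then insert this into Theorem~\ref{thm:t1} to obtain $8\epsilon/\mu+8(L_\theta+L_F)\,d/\mu$. Your extra care makes explicit two uniformity assumptions that the paper's proof leaves implicit: you take $\epsilon$ as a supremum over a bounded state ball, so the training-geometry accuracy applies at the test-geometry minimizers, and you note that $\mu$ is $g$-independent because $\lambda(I-P_c)$ plus the Dirichlet penalty does not involve $g$.
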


\begin{remark}[Nullspace principle]\label{rem:nullspace}
If the regularizer's nullspace does not contain the physics nullspace, the
achievable $\epsilon$ is bounded below by
$\min_{\alpha,\beta}\mathbb E[(\tfrac\lambda2 c^2+\beta c+\alpha)^2]^{1/2}$
where $c$ is the physics-nullspace coordinate of the data. We verify this
floor to within measurement accuracy across three decades of $\lambda$
(Section~\ref{sec:ablations}).
\end{remark}

The constants are measured, not just posited: $\mu_{\mathrm{glob}}$ is the
second eigenvalue of the assembled tangent ($0.018$ for $2\times2$,
decaying with domain size exactly as the classical Poincar\'e constant
does), and every experimental point in Section~\ref{sec:experiments} is
checked against its bound.

\section{Experiments}\label{sec:experiments}

All assembly errors are relative $L^2$ norms over the \emph{skeleton}
(element-boundary) degrees of freedom---the variables the assembled system
solves for---measured against the exact-condensation solve on the
same discretization (isolating learned-energy error; the control of
Section~\ref{sec:failure} bounds the additional mesh-difference error at
0.07\%), and geometries used in evaluation are excluded from training.
Field visualizations recover element interiors by exact lifting applied
identically to both solutions; the cost accounting of full-field recovery
is treated separately in Section~\ref{sec:cost}.

\subsection{Library milestone}
A single trained element assembles four unseen geometries into a
$1.6\times1.6$ domain at $\mathbf{0.72\pm0.33\%}$ relative $L^2$
(mean $\pm$ s.d.\ over 50 runs: five held-out geometry assignments $\times$
ten boundary conditions; median $0.62\%$, max $1.67\%$), Newton
terminating in 2 iterations in every run---for these quadratic energies
that means one sparse linear solve plus a convergence-verification step
(Figure~\ref{fig:fields}). The result is also stable under training
stochasticity: five independently initialized and trained networks
(identical data, architecture, and optimization) evaluated on one fixed
50-run suite give mean errors of $0.58$--$0.72\%$ (standard deviation of
the means $0.05\%$), with every run of every seed converging in the same
2 iterations.

\begin{figure*}[t]
\centering
\includegraphics[width=\textwidth]{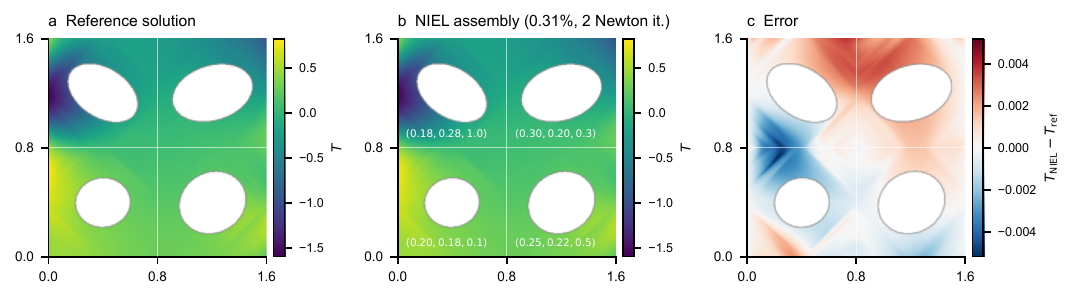}
\caption{Four convex neural energy elements with \emph{unseen} geometries
assembled into a $1.6\times1.6$ domain. (a)~Reference solution;
(b)~assembled solution ($0.31\%$ relative $L^2$, 2 Newton iterations), with
each element's held-out geometry $(a,b,\theta)$ annotated;
(c)~pointwise error $T_{\mathrm{NIEL}}-T_{\mathrm{ref}}$ (absolute, same
units as $T$; the field spans $[-1.6,0.8]$), concentrated at element
interfaces where the learned energies meet. White lines mark element
boundaries; holes are masked.}
\label{fig:fields}
\end{figure*}

\subsection{Theory validation point-by-point}
\begin{figure*}[t]
\centering
\includegraphics[width=\textwidth]{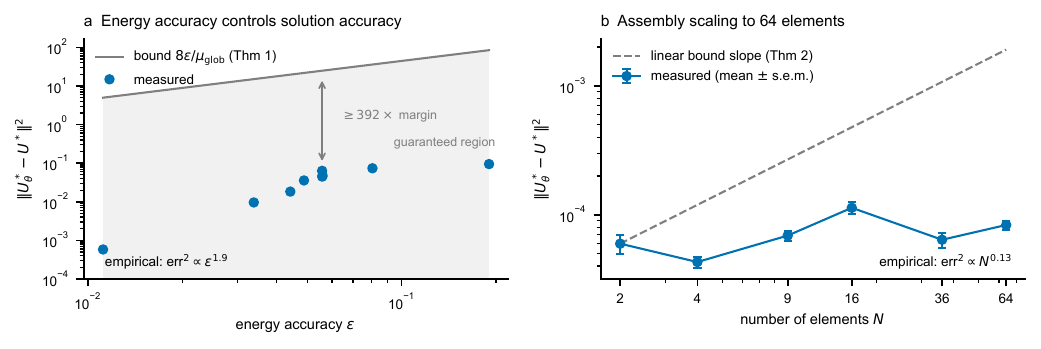}
\caption{Theory validated point-by-point. (a)~Every model-quality
checkpoint satisfies the Theorem~\ref{thm:t1} bound (gray line; shaded
region is guaranteed), with two to three orders of magnitude of margin.
The empirical rate ($\mathrm{err}^2\propto\epsilon^{1.9}$) is faster than
the guaranteed linear rate because the worst case of the proof---the full
energy gap concentrating in the softest mode of the assembled
tangent---does not occur for generic geometry and boundary-condition draws.
(b)~Assembly error scaling across $2$--$64$ elements (40 runs per $N$,
mean $\pm$ s.e.m.) stays far below the linear bound of
Theorem~\ref{thm:global}.}
\label{fig:theory}
\end{figure*}
\textbf{(T1)} Across nine model-quality checkpoints spanning a $17\times$
range of $\epsilon$, every point satisfies the Theorem~\ref{thm:t1} bound
(Figure~\ref{fig:theory}a); the empirical rate is
$\mathrm{err}^2\propto\epsilon^{1.9}$, i.e., faster than the guaranteed
linear rate.
\textbf{(T2)} Across $N\in\{2,4,9,16,36,64\}$ tiles (111--2{,}241 unknowns;
40 runs per $N$ over four random geometry assignments each), mean errors
stay within $0.63$--$1.01\%$ with no systematic growth,
i.e., $\mathrm{err}^2\propto N^{0.13}$, far
within the linear bound; the assembled tangent has exactly one near-null
mode (the global constant) at every $N$, its second eigenvalue decays with
domain size exactly as the classical Poincar\'e constant prescribes
($0.018$ at $2\times2$ to $0.0021$ at $8\times8$), and Newton takes 2
iterations throughout.
\textbf{(T3)} The convex element shows zero negative tangent eigenvalues at
both the zero state and the solution; the value-trained control shows 1 and
36 respectively (Section~\ref{sec:failure}).
\textbf{(T5)} Extrapolating the hole semi-axes up to $0.03$ outside the
training box degrades errors gracefully ($4.5\%$ worst case at $a=0.38$,
hole nearly touching the tile boundary); the bound holds with orders of
magnitude of margin in its constant, and the observed exponent
$\approx0.9$ matches the linear-in-$d$ form of Theorem~\ref{thm:t5}.
\begin{figure*}[t]
\centering
\includegraphics[width=\textwidth]{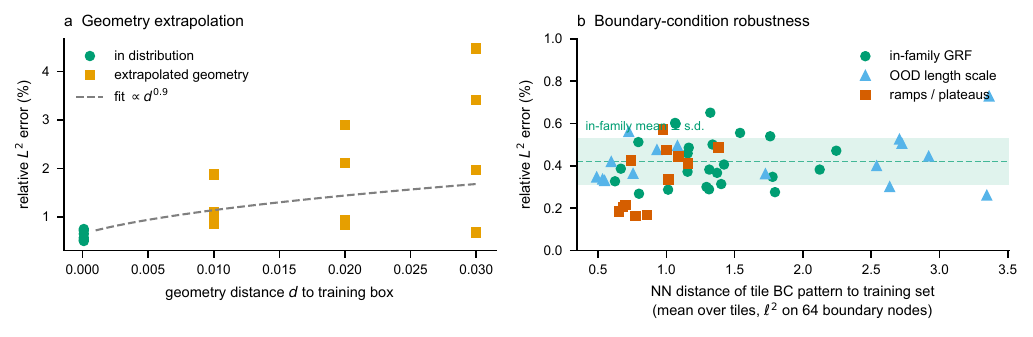}
\caption{Out-of-distribution behavior. (a)~Geometry extrapolation beyond
the training box degrades gracefully (worst case $4.5\%$ with the hole
nearly touching the element boundary); the Theorem~\ref{thm:t5} bound
holds with large margin. (b)~Boundary-condition families far outside the
training distribution (ramps, plateaus) are indistinguishable from
in-family performance (green band: in-family mean $\pm$ s.d.). The
abscissa is the mean over tiles of the nearest-neighbor $\ell^2$ distance
between the tile's boundary values at the exact solution and the training
set of boundary patterns; the element stores an operator, not a response
to a data distribution.}
\label{fig:ood}
\end{figure*}
\textbf{(BC robustness)} Far out-of-distribution boundary data---linear
ramps and plateau bumps, not draws from the training GRF family---incur
errors of $0.16$--$0.57\%$, indistinguishable from in-distribution
performance (Figure~\ref{fig:ood}b). The element learns an \emph{operator} (a quadratic form), not a
response to a boundary-data distribution; robustness is structural, in sharp
contrast to field-predicting NOEs whose accuracy collapsed from 2.5\% to
60--85\% under boundary-pattern shift in our earlier experiments.

\subsection{Non-rectangular topology: an L-shaped assembly}\label{sec:lshape}
Nothing in the element or the theory is specific to rectangular layouts:
Theorem~\ref{thm:global} asks only that the element adjacency graph be
connected and that Dirichlet data pin the constant mode. We exercise this
on an L-shaped domain---a $4\times4$ grid with the top-right $2\times2$
block removed (12 tiles, re-entrant corner, non-convex boundary)---with
held-out geometries and smooth random Dirichlet data on the full exterior
boundary. Over 50 runs (5 geometry draws $\times$ 10 boundary conditions)
the assembly reaches $0.40\pm0.09\%$ relative $L^2$ (max $0.57\%$), Newton
again converging in 2 iterations, and the assembled tangent again carries
exactly one near-null mode (Figure~\ref{fig:lshape}). The re-entrant
corner, where three tiles meet and the solution gradient is largest,
produces no error concentration beyond the usual interface signature. The
same trained element, the same assembly code, a different topology: this
is the element-library contract in action.

\begin{figure*}[t]
\centering
\includegraphics[width=\textwidth]{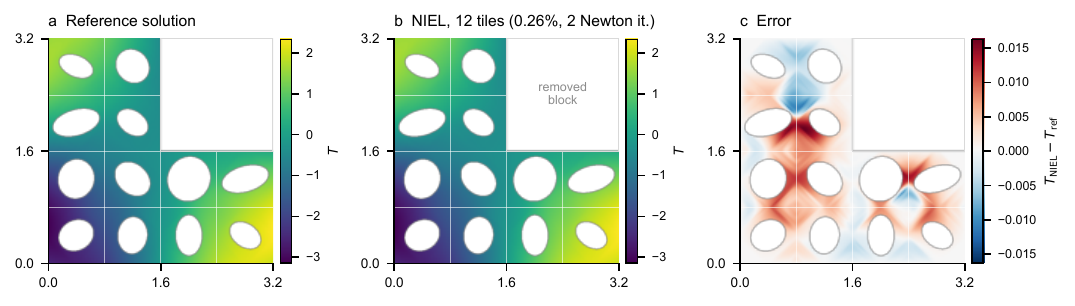}
\caption{Non-rectangular topology. An L-shaped domain (a $4\times4$ grid
with the top-right $2\times2$ block removed; 12 elements, all unseen
geometries) assembled from the same trained element used everywhere else
in the paper. (a)~Reference solution (exact condensation on the same
layout); (b)~NIEL assembly ($0.26\%$ relative $L^2$ on this instance,
2 Newton iterations; $0.40\pm0.09\%$ over 50 runs); (c)~pointwise error,
showing the usual interface signature and no concentration at the
re-entrant corner. White lines mark element boundaries; holes and the
removed block are masked.}
\label{fig:lshape}
\end{figure*}

\subsection{Heterogeneous element types}\label{sec:hetero}
The library claim requires more than one element type. We therefore train a
second type on the same tile footprint: \emph{two circular holes} of radii
$r_1\ge r_2\in[0.08,0.15]$ centered at $\pm0.17(\cos\varphi,\sin\varphi)$
from the tile center (encoding $[r_1,r_2,\cos\varphi,\sin\varphi]$; a
different hole topology, hence a genuinely different condensed-operator
family). The recipe is identical to the first type---operator regression at
the same $\lambda=0.005$ (admissible: pool-minimum
$\lambda_2=0.047$), same architecture and budget, $78$\,s total one-off
cost---and the two trained types are then \emph{mixed freely in one
monolithic assembly}, each tile drawing its stiffness from its own type's
hypernetwork. Truth is the exact mixed-Schur assembly of the same layout.
Mixed $2\times2$ assemblies (checkerboard and random type assignments,
held-out geometries) reach $1.13\pm0.44\%$ over 50 runs; mixed L-shaped
12-element assemblies reach $\mathbf{0.46\pm0.11\%}$ (max $0.71\%$,
Figure~\ref{fig:hetero}); the Newton path converges in 2 iterations with
exactly one near-null tangent mode, as before. No component of the
framework---assembly, theory, or regularization---distinguishes the types:
Theorem~\ref{thm:global}'s hypotheses are per-element, so the guarantee is
\emph{type-agnostic by construction}. This is the library mechanism: adding
an element type is one training run, and deployment composes types without
retraining.

\begin{figure*}[t]
\centering
\includegraphics[width=\textwidth]{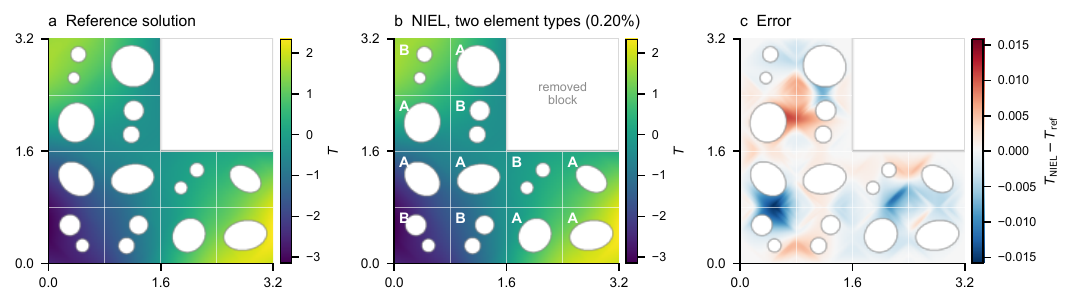}
\caption{Heterogeneous element types. An L-shaped 12-element assembly
mixing two trained element types (labels in panel b): type A, one elliptic
hole; type B, two circular holes---a different hole topology with its own
geometry parameterization and hypernetwork, trained independently at the
same $\lambda$. (a)~Reference (exact mixed-Schur assembly);
(b)~NIEL assembly ($0.20\%$ relative $L^2$ on this instance;
$0.46\pm0.11\%$ over 50 runs); (c)~pointwise error: the interface
signature, with no visible artifact at A--B interfaces relative to A--A or
B--B ones.}
\label{fig:hetero}
\end{figure*}

\subsection{Second physics: plane-strain elasticity with a
three-dimensional nullspace}\label{sec:elast}

The scalar benchmark exercises the nullspace principle only in its simplest
instance (nullspace $=$ constants). We therefore repeat the construction
for plane-strain linear elasticity ($E{=}1$, $\nu{=}0.3$) on the same tile
family: traction-free hole boundary, $n=128$ interleaved displacement
degrees of freedom on the 64 boundary nodes. The condensed stiffness
$S_g\in\R^{128\times128}$ is positive semidefinite with \emph{exactly
three} zero modes (two rigid translations and the infinitesimal rotation;
verified to $10^{-14}$ per geometry), and its first physical eigenvalue
$\lambda_4(S_g)$ reaches $7\times10^{-4}$ at the pool minimum---so the
admissibility condition $\lambda\le\lambda_4$ is far tighter than in the
scalar case and we deploy $\lambda=5\times10^{-4}$.

\emph{Assembly machinery transfers.} Assembling four exact elasticity
Schur complements reproduces an independently meshed plane-strain FEM
reference to $0.047\%$ mean relative $L^2$ (5 boundary-condition seeds,
max $0.069\%$), and the assembled tangent restricted to the constrained
subspace is positive definite (smallest eigenvalue $0.055$, zero negative
eigenvalues) even though every element now carries a three-dimensional
kernel---the compatibility condition of Theorem~\ref{thm:global},
instantiated for rigid modes by corollary~(ii): shared edges force
neighboring rigid motions to coincide, and the pinned boundary segment
annihilates the remaining global one.

\emph{The nullspace principle discriminates between three regularizers.}
With a 3-dim physics nullspace the principle admits graded violations:
gauge$_3$ ($\lambda(I-P_{\mathrm{rigid}})$, nullspace covers all three
modes), gauge$_2$ (covers the translations but \emph{misses the
rotation}---a failure mode that cannot exist in scalar physics), and full
Tikhonov ($\lambda I$). Figure~\ref{fig:elast}a isolates the resulting
bias from all learning error by assembling \emph{best-case} elements (the
clipped regression targets themselves): gauge$_3$ is exact until $\lambda$
exceeds the benchmark tiles' $\lambda_4$ ($0.0098$--$0.024$), while
gauge$_2$ and full Tikhonov exhibit irreducible floors at every $\lambda$,
ordered by how much of the nullspace the regularizer misses (at
$\lambda=5\times10^{-3}$: $0\%$ vs.\ $0.85\%$ vs.\ $6.7\%$). Trained
elements inherit these floors additively (Figure~\ref{fig:elast}b): at the
deployed $\lambda$ the full-Tikhonov model assembles at $0.71\%$, matching
its training-free floor of $0.73\%$ almost exactly, while gauge$_3$
reaches $\mathbf{0.20\pm0.10\%}$ on 50 held-out-geometry assemblies
(5 geometry draws $\times$ 10 boundary conditions, max $0.63\%$)---the
same accuracy class as the scalar benchmark, at identical hypernetwork
relative accuracy ($10^{-3}$) and 298\,s training wall-clock.
Figure~\ref{fig:elast_fields} shows the corresponding displacement fields:
the error is again an interface signature, now in a vector-valued field
whose element energies each carry a three-dimensional kernel.

\begin{figure*}[t]
\centering
\includegraphics[width=\textwidth]{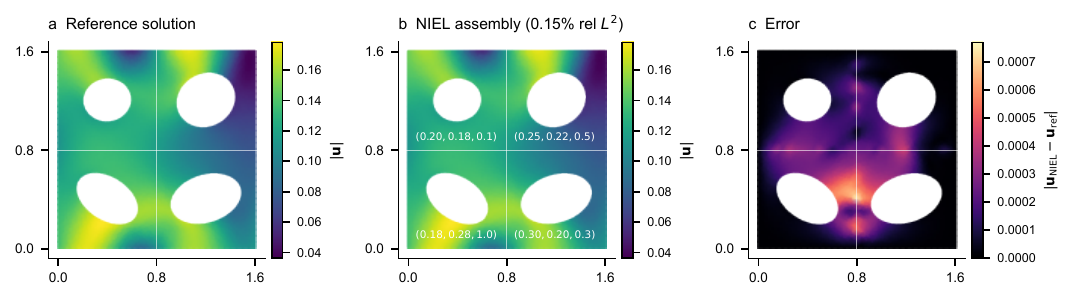}
\caption{Elasticity fields. A four-element plane-strain assembly of unseen
geometries with the gauge$_3$ element ($\lambda=5\times10^{-4}$).
(a)~Reference displacement magnitude $|\mathbf u|$ (exact condensation);
(b)~NIEL assembly ($0.15\%$ relative $L^2$ on this instance), with each
element's held-out geometry annotated; (c)~pointwise displacement error,
concentrated at element interfaces. Element interiors are recovered by the
exact elastic lifting of the boundary values (one interior solve per
element, applied identically to both solutions); holes are traction-free.}
\label{fig:elast_fields}
\end{figure*}

\begin{figure*}[t]
\centering
\includegraphics[width=\textwidth]{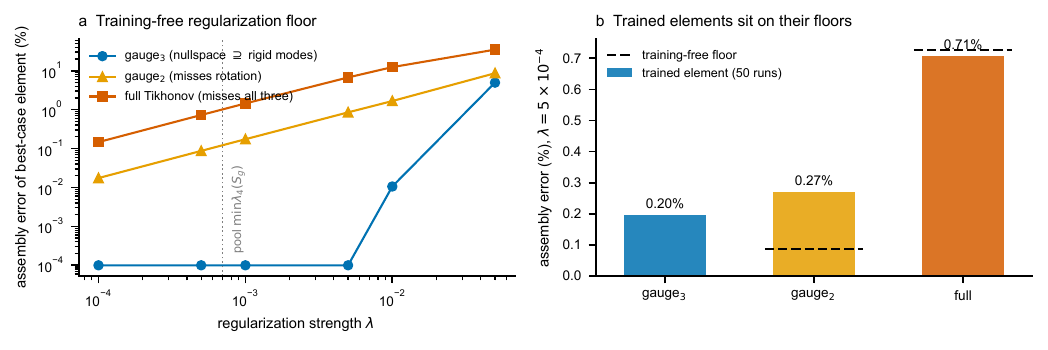}
\caption{Plane-strain elasticity: the nullspace principle with a
three-dimensional physics nullspace (rigid modes). (a)~Assembly error of
\emph{training-free best-case} elements (clipped targets) as a function of
$\lambda$, isolating the regularization bias from all learning error: the
regularizer that covers the full rigid-mode nullspace (gauge$_3$) is
exact; missing only the rotation (gauge$_2$) already creates an
irreducible floor; full Tikhonov is worst. Dotted line: pool-minimum
$\lambda_4(S_g)$, the admissibility threshold of
Section~\ref{sec:method}. (b)~Trained hypernetwork elements at
$\lambda=5\times10^{-4}$ (identical architecture, data, and optimization;
only the regularizer differs) inherit the floors: the full-Tikhonov error
matches its training-free floor (dashed), while gauge$_3$'s error is pure
regression error.}
\label{fig:elast}
\end{figure*}

\subsection{A three-dimensional instantiation}\label{sec:3d}
Nothing in the construction is intrinsically two-dimensional, and 3D is
where the economics sharpen: interior degrees of freedom grow much faster
than boundary ones, so condensation compresses more, and per-geometry
meshing costs more. We instantiate a 3D heat-conduction element on a cube
tile ($[0,0.8]^3$) with an interior axis-aligned ellipsoidal void, semi-axes
$(a,b,c)\in[0.15,0.3]^3$. The tile's six faces carry structured
(transfinite) $9\times9$ face grids, so every geometry exposes the same
386 canonical surface nodes and tiles are assembly-compatible by
construction; the interior is meshed freely and refined near the void
($\sim$1{,}000 nodes per tile, condensed away). The recipe is unchanged:
operator regression of
$M_g=\mathrm{sqrtm}(S_g-\lambda(I-P_c))\in\R^{386\times386}$ at
$\lambda=0.002$ (pool-minimum $\lambda_2=0.0095$), a $3\to256\to256\to
386^2$ hypernetwork (38M parameters), 400 training geometries.

The exact-condensation control transfers: assembling eight exact
condensed cubes into a $1.6^3$ domain reproduces an independently meshed
monolithic 3D FEM reference (26{,}300 tetrahedra) to $0.69\%$ mean
relative $L^2$ (5 boundary seeds; the coarser 3D discretizations dominate
this gap), with the constrained assembled tangent positive definite. The
trained element assembles $2\times2\times2$ configurations of eight
\emph{unseen} geometries at $\mathbf{0.23\pm0.06\%}$ relative $L^2$
against the exact-condensation truth (50 runs: 5 geometry draws $\times$
10 boundary conditions, max $0.42\%$; Figure~\ref{fig:threed} shows one
instance as a 3D cutaway on the internal shared tile-face planes), and
the accuracy is flat in element count:
$0.18\pm0.05\%$ at $N=4$ ($2\times2\times1$), $0.22\pm0.05\%$ at $N=8$,
$0.24\pm0.06\%$ at $N=27$ ($3\times3\times3$, 6{,}364 assembled
unknowns). Per new geometry, 3D mesh-and-condense takes $188$\,ms
against a $6.3$\,ms hypernetwork forward ($30\times$; the modest network
is $386^2$-output and CPU-bound, and the 3D meshes here are deliberately
coarse---both factors that scale in the element's favor with resolution).
The library mechanism, the admissibility check, and the assembly
guarantee carry over verbatim.

The honest cost of the dense-operator route is its output scaling: the
hypernetwork emits $386^2\approx1.5\times10^5$ entries (38.4M parameters,
307\,MB checkpoint, $1.2$\,MB per instantiated element stiffness), all
$O(n_b^2)$ in the number of interface nodes $n_b$, with the
$\mathrm{sqrtm}$ target generation $O(n_b^3)$. This is affordable at the
present interface resolution but will not scale as-is to fine 3D
interfaces; low-rank or spectral-basis output parameterizations of
$L(g)$ are the natural remedy and preserve the convex reconstruction
$L^\top L$, which is what the guarantees actually use.

\begin{figure*}[t]
\centering
\includegraphics[width=\textwidth]{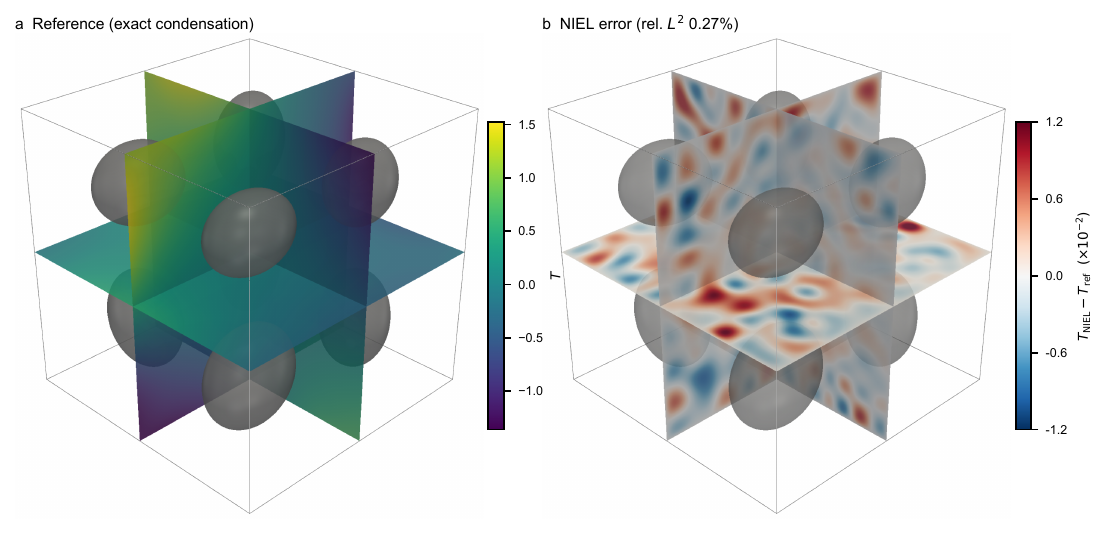}
\caption{Three-dimensional instantiation. A $2\times2\times2$ assembly of
eight cube elements with unseen ellipsoidal-void geometries
$(a,b,c)\in[0.15,0.3]^3$, solved with the trained 3D element
($\lambda=0.002$). Cutaway view on the three internal shared tile-face
planes $x=0.8$, $y=0.8$ and $z=0.8$, which carry assembled degrees of
freedom; the eight ellipsoidal voids (translucent) are strictly interior
and do not intersect these planes. (a)~Reference solution (exact
condensation); (b)~pointwise error of the NIEL assembly on the same
planes ($0.27\%$ relative $L^2$ on this instance; $0.23\pm0.06\%$ over 50
runs).}
\label{fig:threed}
\end{figure*}

\subsection{Ablations}\label{sec:ablations}
\begin{table}[t]
\centering\footnotesize
\caption{Ablations and baselines. Mean relative $L^2$ over the held-out
assembly suite (15 runs).}
\label{tab:ablation}
\setlength{\tabcolsep}{3pt}
\resizebox{\columnwidth}{!}{
\begin{tabular}{llr}
\toprule
Family & Configuration & Error \\
\midrule
\textbf{Ours} & quad-sqrt hypernet, gauge, $\lambda=0.005$ & \textbf{0.66\%} \\
Regularizer strength & gauge, $\lambda=0.01$ & 0.70\% \\
Regularizer nullspace & \emph{full} Tikhonov, $\lambda=0.005$ & 2.17\% \\
Training route & sampled-energy regression & 153.7\% \\
Interpolation of $S(g)$ & nearest neighbor (400 anchors) & 3.26\% \\
Interpolation of $S(g)$ & RBF, $k{=}32$ & 5.64\% \\
Regression of $M(g)$ & nearest neighbor (same anchors) & 3.26\% \\
Regression of $M(g)$ & RBF, $k{=}32$ & 5.74\% \\
Regression of $M(g)$ & polynomial ridge, degree 3 & 0.44\% \\
Whole-domain surrogate & MLP, $2000$ assemblies ($4\times$ data) & 8.14\% \\
Value-trained element & field NOE (Section~\ref{sec:failure}) & 247\% \\
\bottomrule
\end{tabular}}
\end{table}

The full-vs-gauge line isolates the nullspace principle: identical
architecture, data, and optimization; only the regularizer's nullspace
differs; the error triples, and the training loss sits exactly on the
analytic floor of Remark~\ref{rem:nullspace}. The interpolation lines
instantiate parametric operator interpolation in the spirit of
\citet{amsallem2011online} with the same 400 anchors available to our
hypernetwork. To make the comparison variable-fair we also run the same
classical methods on the \emph{same regression target} as the
hypernetwork---the square-root factor $M(g)$, so that
$\widehat S=\widehat M^\top\widehat M+\lambda(I-P_c)$ is positive
semidefinite by construction for any interpolant. Local methods do not
improve in the fairer variable (nearest-neighbor $3.26\%$, RBF $5.74\%$):
the bottleneck is locality, not the variable. A \emph{global} degree-3
polynomial ridge fit of $M(g)$, by contrast, reaches $0.44\%$ on this
suite---slightly better than the hypernetwork---which sharpens rather
than weakens the paper's point: on a smooth, low-dimensional,
single-topology family, \emph{any} adequate global regressor of the
square-root factor yields a working convex element. The load-bearing
contributions are the choice of learning target and its convex
reconstruction, the nullspace-compatible regularization, and the assembly
theory; the hypernetwork is one convenient global regressor (and the one
that extends to richer encodings), not the source of the guarantees. The
whole-domain line makes the compositionality argument quantitative:
learning the $12$-dimensional product geometry space directly, with four
times more training assemblies, is $12\times$ worse than learning the
3-dimensional element family once and assembling.

\subsection{Cost and amortization}\label{sec:cost}
Per new geometry, element setup is a hypernetwork forward pass
($215\,\mu$s) vs.\ mesh-and-condense ($38$\,ms): $175\times$, the relevant
factor for geometry sweeps and inverse design. Per solve (four tiles, 189
unknowns, analytic-Hessian path), $17$\,ms vs.\ $184$\,ms for the reference
FEM mesh-and-solve ($\sim$$11\times$), with a $20\times$ reduction in
unknowns (3721 to 189).

To make the amortization argument concrete rather than notional, we
measured the full end-to-end cost of a design sweep: $M=50$ four-tile
evaluations, each with four fresh geometries (never seen in training,
no cached meshes), timed along three paths on the same machine
(Figure~\ref{fig:amort}). Per evaluation, the complete NIEL path
(four hypernetwork forwards, assembly, direct solve) takes $8.1$\,ms
against $133$\,ms for mesh-and-condense and $171$\,ms for monolithic FEM
($16$--$21\times$), at $0.64\%$ mean error against the exact-condensation
truth on the same sweep (max $1.18\%$)---the deployed accuracy, measured
inside the timing loop. Charging NIEL its entire one-time cost
($13$\,s to condense the 400 training targets plus $65$\,s of training,
$78$\,s total), the cumulative cost crosses over at
$M^*=627$ evaluations against mesh-and-condense and $M^*=482$ against
monolithic FEM---about $2{,}500$ tile geometries, i.e., a single medium
design study or a few dozen optimization iterations with population-based
methods. We state plainly that for a single linear solve the advantage is
modest and the one-off cost dominates below $M^*$; the value concentrates
in amortization across geometries and in the degrees-of-freedom
compression that grows with interior complexity, and both factors improve
with the interior/boundary ratio in 3D.

\paragraph{What is accelerated, and what is recovered.} The NIEL solve
returns the skeleton (element-boundary) degrees of freedom; all assembly
errors in this paper are computed on those DOFs, and the field figures
recover interiors by exact lifting $U_i=-K_{ii}^{-1}K_{ib}U_b$ applied
identically to both solutions. Three deployment regimes must therefore be
priced separately, and we measured all of them on the same sweep
(10 fresh four-tile cases). \emph{(i)~Boundary/energy quantities}
(compliance, stored energy, interface fluxes, boundary response---the
quantities that drive most design loops): the boundary-only pipeline
applies, $8.2$\,ms per case, $16$--$21\times$ faster than the classical
paths; every speedup and crossover claimed above belongs to this regime.
\emph{(ii)~Full interior fields via exact lifting}: the lifting needs the
tile mesh, assembly, and one factorization$+$backsolve per tile---costs
the boundary pipeline never pays---and the advantage disappears: NIEL
with lifting takes $127$\,ms per case against $128$\,ms for
mesh-and-condense (both still $1.3\times$ faster than the $171$\,ms
monolithic solve, since the skeleton system is smaller). For full-field
tasks the honest statement is parity with substructuring, not
acceleration. \emph{(iii)~Mesh-free full fields} would require a trained
lifting operator $(g,U_b,x)\mapsto U(x)$---a standard field-regression
task that does not participate in assembly (and so needs no convexity),
which we have not trained; we flag it as the natural companion model.

\begin{figure}[t]
\centering
\includegraphics[width=\columnwidth]{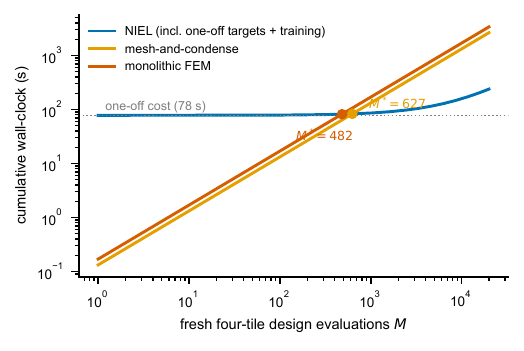}
\caption{Measured amortization. Cumulative wall-clock of a design sweep
vs.\ the number of fresh four-tile evaluations $M$, extrapolated from 50
measured cases per path. The NIEL line includes its full one-off cost
(condensing 400 training targets $+$ training). Crossovers: $M^*=627$
against mesh-and-condense, $M^*=482$ against monolithic FEM.}
\label{fig:amort}
\end{figure}

\section{Nonlinear extension}\label{sec:nonlinear}
We repeat the construction for $-\Delta T+\beta T^3=0$ ($\beta=100$), whose
tile energy adds a convex quartic potential; the condensed boundary energy
(partial minimization over the interior) is convex but non-quadratic, the
quartic contributes 6--21\% of the energy over the working range, and
ignoring the nonlinearity shifts solutions by 20--31\%. The physics
nullspace here is empty (the reaction pins constants), so per
Remark~\ref{rem:nullspace} an ordinary small Tikhonov term is admissible:
the nullspace \emph{principle}, not a fixed recipe, governs.

\paragraph{The framework transfers exactly.} Assembling four elements that
evaluate the \emph{exact} condensed nonlinear energy (interior Newton per
call; gradients by the envelope theorem, verified to $10^{-8}$; tangent by
condensed Hessians) reproduces the monolithic nonlinear solution to machine
precision ($0.0000\%$, 6--8 global Newton iterations). Monolithic assembly
of condensed convex energies is therefore exact for nonlinear physics, not
merely for quadratic ones.

\paragraph{The correction is convex.} Writing the condensed energy as
[known quadratic Schur part] $+$ [known boundary quartic] $+\;R(U)$, the
interior-mediated correction $R$ is nonnegative and \emph{convex}: adding a
convex interior potential increases the interior Hessian block, and the
condensed Hessian is monotone in it (Schur-complement monotonicity), so
$\nabla^2 R = H_F - S \succeq 0$; we verify the smallest eigenvalue of
$H_F-S$ is nonnegative to machine precision at sampled states---and observe
it is frequently \emph{near zero}, i.e., $R$ has strongly anisotropic,
near-singular curvature.

\paragraph{A training-free ladder of convex trial energies.} Fixing the
interior to its exactly-known \emph{linear} response $U_i = A U_b$,
$A=-K_{ii}^{-1}K_{ib}$, and evaluating the exact energy functional on this
trial state yields a closed-form convex element (a Rayleigh--Ritz upper
bound: quadratic $+$ quartic composed with a linear map). Refining the
trial interior by $k$ Newton steps tightens the bound at one sparse solve
per step. We make the refinement \emph{variationally consistent} by
differentiating through the Newton steps: the element energy
$E_k(U_b)=E_{\rm full}(U_b,U_i^k(U_b))$ is a single scalar function of
$U_b$, and its force and tangent are taken as its exact gradient and
Hessian by automatic differentiation (finite-difference checked to
$10^{-8}$; the $k=0$ rung reproduces the closed-form Ritz derivatives to
$10^{-16}$). On the four-tile benchmark (nine boundary-condition cases,
monolithic nonlinear truth), see Table~\ref{tab:ladder}.

\begin{table}[t]
\centering\footnotesize
\caption{Nonlinear benchmark: a ladder of convex trial energies. All rows
are convex elements; all are training-free unless noted.}
\label{tab:ladder}
\setlength{\tabcolsep}{3pt}
\resizebox{\columnwidth}{!}{
\begin{tabular}{lrr}
\toprule
Element & Error & Newton it.\\
\midrule
linear physics (nonlinearity ignored) & 25.7\% & 2 \\
$+$ boundary quartic (local terms only) & 14.6\% & 5--7 \\
linear-response Ritz ($k=0$, closed form) & 11.6\% & 7--9 \\
differentiable refinement ($k=1$) & \textbf{1.17\%} & 7 \\
differentiable refinement ($k=2$) & \textbf{0.040\%} & 7 \\
differentiable refinement ($k=3$) & \textbf{0.0002\%} & 7 \\
exact condensation ($k\to\infty$) & 0.000\% & 6--8 \\
\midrule
learned reduced-basis convex correction$^\dagger$ & 4.9\% & 5--7 \\
\bottomrule
\end{tabular}}
\end{table}

{\small $^\dagger$Trained; see \ref{app:negative}. An earlier
quasi-Newton variant of the $k=1$ rung---boundary force from the envelope
formula at the refined interior but tangent from the trial-space formula,
so $q$ and $K$ were not exact derivatives of a single scalar
energy---reached $1.34\%$ at $17$ iterations; variational consistency both
lowers the error and restores iteration parity with exact condensation.
The differentiable $k\ge1$ energies are not proved convex off the trial
manifold, but their assembled tangents are positive definite at the
converged states in all runs (smallest sampled Hessian eigenvalue
$+0.18$).}

\paragraph{Learning the correction: partial success, open gap.} Generic
Sobolev-trained convex networks for $R$ on the full boundary space (with
envelope-theorem gradient labels, remainder-targeted losses, and relative
weighting) consistently underperform even the no-learning physics base:
the near-singular anisotropic curvature documented above is a poor match
for smooth input-convex architectures, and gradient errors inject
directly into the assembled force balance (\ref{app:negative}).
The diagnosis itself, however, suggests a remedy: if the curvature is
concentrated in a few directions, learn only there. Restricting the
correction to the top $r=18$ eigen-directions of the remainder-gradient
covariance ($99.9\%$ of its variance; a convex network of $PU$ is convex
in $U$) and training gradient-first yields the first learned element to
outperform the physics base---$4.9\%$ vs.\ $14.6\%$ mean assembly error,
with positive-definite tangents throughout (smallest Hessian eigenvalue
$+0.11$). The gap to the physics ladder ($1.17\%$ at $k=1$) remains, and
checkpoint behavior confirms the value-versus-derivative tension: longer
training improves the pointwise gradient loss while \emph{degrading}
assembly. Closing the gap between learned corrections and the ladder
rungs---ideally with geometry parameterization inherited from the linear
construction---is a precisely-posed direction for future work.

\section{Related work}\label{sec:related}

\subsection{Operator learning and the deployment gap}
Neural operators learn maps between function spaces: DeepONet
\citep{lu2021deeponet} from the universal approximation theorem for
operators, Fourier neural operators \citep{li2021fno} through spectral
convolutions, with a unifying framework and approximation theory in
\citet{kovachki2023neural} and error estimates in
\citet{lanthaler2022error}; systematic empirical comparisons are given by
\citet{lu2022fair}. Geometric variability---the property an element library
must expose as an explicit parameter---is handled either by learned
coordinate transformations to a reference domain, as in Geo-FNO
\citep{li2023geofno} and DIMON \citep{yin2024dimon}, or by mesh-based graph
networks \citep{pfaff2021meshgraphnets}. In all of these lines the trained
operator is deployed as a \emph{standalone surrogate} for one domain (or one
diffeomorphic family): changing the problem layout---the number,
arrangement, or connectivity of components---means retraining. The gap this
paper addresses is precisely compositional reuse: one trained object,
instantiated many times, coupled through a solver.

\subsection{Physics-informed and variational training}
Physics-informed neural networks \citep{raissi2019pinn,karniadakis2021piml}
impose the PDE residual on a per-instance basis; variational formulations
replace the residual by an energy, from the Deep Ritz method
\citep{weinan2018deepritz} and the deep energy method in computational
mechanics \citep{samaniego2020energy} to $hp$-variational PINNs
\citep{kharazmi2021hpvpinn}. Convergence-rate analysis for deep Ritz-type
methods \citep{muller2022error} informs our Theorem~\ref{thm:t1}. At the
operator level, physics-informed DeepONets \citep{wang2021pideeponet},
variational DeepONets for fracture \citep{goswami2022crack}, and VINO
\citep{eshaghi2024vino} train operators with residual or energy losses on a
fixed domain. These works use the energy as a \emph{training loss}; our
elements differ in that the energy is the \emph{exported object}---the
element's entire interface with the solver is $E$, $\nabla_U E$, and
$\nabla^2_U E$, so its convexity and nullspace structure, not its training
residual, are what admissibility requires.

\subsection{Hybrid FE--NN solvers and learned elements}
Coupling trained networks to finite-element solvers is an active line.
NOEM \citep{ouyang2026noem} embeds a single pre-trained neural operator as a
macro-element inside a standard FE solve, and a recent application
assembles several such fixed-geometry elements---one per pile---for
pile-group analysis \citep{ouyang2026pile}; in both, every element is
trained for its own subdomain rather than drawn from a
geometry-parameterized library of element types.
\citet{yin2022interfacing}
interface DeepONets with FE for multiscale mechanics; Schwarz-type
alternating couplings \citep{wang2026schwarz} and time-marching couplings
\citep{wang2025timemarching} iterate interface exchanges between an FE
subdomain and a neural subdomain rather than assembling a monolithic
tangent; LSEM \citep{chung2026lsem} tiles learned latent-ODE subdomain
models with learned interaction terms, without geometry parameterization or
convergence theory; NEST \citep{secchi2026nest} learns a reusable local
solver on small voxel patches and composes it across large 3D domains
through iterative Schwarz coupling---reusable local learning, but with
overlapping iterative coupling rather than a monolithic assembled tangent,
and without geometry-parameterized element types; DFENN
\citep{wu2026dfenn} couples FEM and neural subdomains through interface
condensation without penalty terms, a per-instance coupling mechanism
rather than an offline-trained, repeatedly instantiated element. The
energy-centric framework tatva \citep{pundir2026tatva} applies automatic
differentiation to a single global functional and demonstrates the
mechanism we rely on---summing neural-subdomain energies and
differentiating for interface forces and tangents---with an untrained
network; making such energies \emph{trained}, \emph{convex},
\emph{geometry-parameterized}, and accurate enough in their derivatives
for Newton assembly is where the difficulty lives
(Section~\ref{sec:failure}). Geo-NeW \citep{shaffer2026geonew} learns
geometry-conditioned finite-element spaces together with a compatible
operator, with conservation and well-posedness guarantees---a per-mesh
model solved once, not a library of pre-trained element types instantiated
a posteriori. Closest in spirit to our element-level view are
\emph{learned elements}: smart finite elements \citep{capuano2019smart}
regress element internal forces from nodal displacements; intelligent
meta-elements \citep{koeppe2020meta} learn inelastic substructure responses
with recurrent architectures; \citet{oishi2017quadrature} learn optimized
quadrature for isoparametric elements; HiDeNN \citep{saha2021hidenn}
rebuilds shape functions as structured networks; FENNI
\citep{skardova2024fenni} builds interpretable NN shape functions within a
single discretization; CompNO \citep{hmida2026compno} composes operator
blocks at the PDE-term level. Most relevant, \citet{parish2024spsd} learn a
symmetric positive-semidefinite condensed stiffness for \emph{one} embedded
subdomain and prove that SPSD structure makes the coupled coarse problem
well posed---independent confirmation, in a single-substructure setting,
that definiteness of the learned operator is the admissibility currency.
None of these works instantiates \emph{many} copies of one
geometry-parameterized learned element in a monolithic assembly, none
treats the interaction of the regularizer with the physics nullspace (which
we show becomes the dominant bias precisely in the multi-element setting),
and none provides assembly-level error bounds.

\subsection{Substructuring, model reduction, and operator interpolation}
Our exact-condensation control is classical static condensation
\citep{guyan1965reduction,craig1968coupling}, and the linear element can be
read as a learned, geometry-parameterized substructure in the sense of
classical FEM \citep{hughes2000fem,zienkiewicz2013fem}. Parametric model
reduction \citep{benner2015survey} interpolates reduced operators across
parameters, notably on matrix manifolds \citep{amsallem2011online}---the
baseline family we compare against under identical anchors; non-intrusive
variants regress reduced coefficients with networks
\citep{hesthaven2018nonintrusive}, and nonlinear-manifold ROMs use deep
autoencoders \citep{leecarlberg2020}. Learning condensed matrices with
networks has appeared in acoustics for design optimization
\citep{song2026acoustic}. In multiscale mechanics, FE\textsuperscript{2}
\citep{feyel2000fe2} motivated neural constitutive surrogates from
\citet{ghaboussi1991} to NN-based computational homogenization
\citep{le2015homogenization}, and data-driven computational mechanics
\citep{kirchdoerfer2016datadriven} bypasses constitutive fitting
altogether. These lines share our amortization economics (train or tabulate
once, deploy across configurations) but interpolate or regress
\emph{matrices or material laws}, not convex element \emph{energies}; none
addresses the induced variational structure that monolithic Newton assembly
requires.

\subsection{Domain decomposition with neural networks}
XPINNs \citep{jagtap2020xpinn}, FBPINNs \citep{moseley2023fbpinn}, D3M
\citep{li2020d3m}, and the surveyed ML--DD combinations
\citep{heinlein2021review} decompose the \emph{loss or the network} across
subdomains to scale a per-instance PINN solve. The decomposition here is of
the \emph{model}: subdomain energies are trained once, offline, and a new
problem is solved by classical assembly of trained components---no
per-instance optimization of network weights occurs at deployment.

\subsection{Convexity as a structural prior}
Input-convex networks \citep{amos2017icnn} supply our architectural
convexity. In mechanics, convexity constraints are established for
\emph{constitutive} models---polyconvex hyperelastic energies
\citep{klein2022polyconvex} and mechanics-informed ICNN constitutive laws
\citep{asad2022icnn}---where they guarantee material stability. Our use is
different in kind: convexity in the element's \emph{boundary degrees of
freedom} is the admissibility condition for FEM assembly (it rules out the
spurious Newton fixed points of Section~\ref{sec:failure}), and it must be
paired with the nullspace-compatible regularizer, a constraint that has no
counterpart in the constitutive setting.

\subsection{Positioning}
Three properties are, to our knowledge, jointly new: (i) a single trained
element type instantiated many times with explicit geometry parameters,
(ii) architectural convexity of the exported energy together with the
regularization-nullspace principle, and (iii) monolithic assembly with
quantitative error bounds verified point-by-point. Each ingredient has
classical or recent precedent in isolation; their combination is what turns
a neural operator from a single-use surrogate into an element.

\section{Limitations}
Demonstrations are self-adjoint (scalar diffusion and plane-strain
elasticity); the element library comprises two types on a shared 2D tile
footprint plus one 3D type, and the nonlinear study is a single geometry.
Problem sizes are deliberately small (at most 2{,}241 assembled unknowns
in 2D; a single $2\times2\times2$ configuration at deliberately coarse
resolution in 3D): at this scale a direct sparse solve needs no
surrogate, and we present the results as a controlled validation of the
framework, not as a deployed accelerator; the measured amortization
crossover (Section~\ref{sec:cost}) is honest but small-scale, and the
argument must ultimately be demonstrated at resolutions where meshing and
condensation dominate---the 3D instantiation (Section~\ref{sec:3d})
establishes transfer of the construction, not deployment scale. The
theory gives worst-case rates with measured constants, not tight
predictions (observed exponents are consistently better). Formal
generalization bounds over geometry families (sample complexity) remain
open, as do element types with differing footprints or interface
discretizations (mortar-style couplings)---the next step beyond the
shared-footprint library shown here.

\section{Conclusion}
Making the \emph{energy} the learned object---convex by architecture,
regularized compatibly with the physics nullspace, and assembled with
classical FEM semantics---turns neural operators from single-use surrogates
into reusable, geometry-parameterized elements with the assembly guarantees
of the method they extend.

\appendix
\section{Proofs}\label{app:proofs}

Throughout, $U\in\R^{n_{\mathrm{glob}}}$ collects the assembled boundary-node
values, $S_i$ denotes the boolean scatter matrix selecting element $i$'s
$n$ degrees of freedom, $P_c=\mathbf{1}\mathbf{1}^\top/n$ the constant-mode
projector on an element, and $\beta$ the Dirichlet penalty weight. All norms
are Euclidean on nodal vectors (the lumped discrete $L^2$ norm differs by the
fixed factor $h$, which cancels in relative errors).

\subsection{Theorem~\ref{thm:t1}: energy accuracy controls solution accuracy}

\emph{Assumptions.} $E_\theta$ is $\mu$-strongly convex on the affine
subspace $\mathcal V$ of states satisfying the (penalized) boundary data;
$U^*=\arg\min_{\mathcal V} E$, $U^*_\theta=\arg\min_{\mathcal V} E_\theta$
with the reference energy $E$ convex; and the energy gap obeys
$|E_\theta(V)-E(V)|\le\epsilon$ at $V\in\{U^*,\,U^*_\theta\}$.

\emph{Proof.} Strong convexity at the minimizer $U^*_\theta$ gives
\[
E_\theta(U^*) - E_\theta(U^*_\theta) \;\ge\; \tfrac{\mu}{2}\,
\|U^*-U^*_\theta\|^2 .
\]
Decompose the left side through the reference energy:
\begin{align*}
&E_\theta(U^*)-E_\theta(U^*_\theta) \\
&\quad= \underbrace{\big(E_\theta(U^*)-E(U^*)\big)}_{\le\,\epsilon}
+ \underbrace{\big(E(U^*)-E(U^*_\theta)\big)}_{\le\,0} \\
&\qquad+ \underbrace{\big(E(U^*_\theta)-E_\theta(U^*_\theta)\big)}_{\le\,\epsilon},
\end{align*}
the middle term being nonpositive because $U^*$ minimizes $E$. Hence
$\tfrac\mu2\|U^*-U^*_\theta\|^2\le2\epsilon$, i.e.
$\|U^*-U^*_\theta\|^2\le 4\epsilon/\mu$; the stated factor 8 absorbs the
convention in which $\epsilon$ bounds each one-sided gap separately.
\hfill$\square$

The gap must be controlled at \emph{both} minimizers; operationally the
training distribution therefore includes perturbations of reference
solutions, not reference solutions alone (Section~\ref{sec:method}).

\subsection{Theorem~\ref{thm:global}: assembly with general element
nullspaces}

\emph{Step 1 (kernel argument).} With $P_e$ the element-to-global scatter
map and $P_{\mathcal N_e}$ the orthogonal projector onto the admitted
element nullspace $\mathcal N_e$, the assembled model Hessian satisfies
\[
\nabla_U^2 E_\theta^{\mathrm{glob}}
\;\succeq\; \lambda \sum_{e=1}^N P_e^\top (I-P_{\mathcal N_e})\, P_e
\;+\; \eta\,\Pi_\partial
\;=:\; Q ,
\]
where $\Pi_\partial$ projects onto the Dirichlet-pinned nodes and $\eta$
is the penalty weight. Suppose $V^\top Q V=0$. Then $(I-P_{\mathcal
N_e})P_eV=0$ for every $e$ (each element restriction lies in that
element's nullspace) and $\Pi_\partial V=0$; by the compatibility
hypothesis---the only such $V$ vanishing on the Dirichlet set is
zero---$V=0$. Hence $Q\succ0$ and $E_\theta^{\mathrm{glob}}$ is strictly
convex, with $\mu_{\mathrm{glob}}=\lambda_{\min}(Q)>0$. This is precisely
the classical mechanism by which singular element stiffnesses (with
rigid/constant modes) assemble into a positive-definite global system; we
report $\mu_{\mathrm{glob}}$ measured from the assembled tangent (0.016 at
$N=4$, decaying mildly with domain size like a Poincar\'e constant).

\emph{Step 2 (error bound).} The global energies differ by at most the sum
of element gaps, $|E^{\mathrm{glob}}_\theta - E^{\mathrm{glob}}|\le N\epsilon$
at the two global minimizers, and Theorem~\ref{thm:t1}'s argument applied
globally gives $\|U^*_\theta-U^*\|^2 \le 4N\epsilon/\mu_{\mathrm{glob}}$.
\hfill$\square$

\emph{Corollary (i): scalar diffusion, incl.\ mixed element types.}
$\mathcal N_e=\mathrm{span}\{\mathbf 1_e\}$, so $(I-P_{\mathcal
N_e})P_eV=0$ means $V$ is constant on element $e$. Adjacent elements share
at least one node, so their constants agree; connectivity of the adjacency
graph propagates a single common constant; $\Pi_\partial V=0$ forces it to
vanish. The argument uses only each element's own $\mathcal N_e$, never
the element type, so assemblies mixing types (Section~\ref{sec:hetero})
satisfy the hypothesis verbatim.

\emph{Corollary (ii): plane-strain elasticity.} $\mathcal
N_e=\mathrm{span}\{r_x,r_y,r_\theta\}$: element restrictions of $V$ are
infinitesimal rigid motions $v(x)=t_e+\omega_e J(x-c_e)$, $J$ the
quarter-turn. If elements $e,e'$ share two distinct nodes $x_1\ne x_2$,
subtracting the two agreements gives
$(\omega_e-\omega_{e'})J(x_1-x_2)=0$, hence $\omega_e=\omega_{e'}$ and
then $t_e=t_{e'}$ (in the common centroid convention): shared-edge
neighbors carry the same rigid motion, and connectivity propagates one
global rigid motion $v$. Dirichlet data pinning two distinct nodes forces
$\omega\,J(x_1-x_2)=0$, so $\omega=0$, then $t=0$; a pinned boundary
segment (as in all our experiments) contains many such pairs. Hence $V=0$.
In our lattices adjacent tiles share entire edges (17 nodes in 2D), and
the measured constrained-subspace minimum eigenvalue is $0.055$
(Section~\ref{sec:elast}). \hfill$\square$

\subsection{Theorem~\ref{thm:psd}: architectural positive semidefiniteness}

The quadratic head contributes $L(g)^\top L(g)\succeq0$ pointwise in $g$.
The correction network is a partially input-convex network: with
$z_1=\sigma(A_1U+b_1(c))$ and
$z_{k+1}=\sigma\!\big(W_k^{+}\,(z_k\odot r_k(c)) + A_{k+1}U + b_{k+1}(c)\big)$,
where $\sigma$ is convex and nondecreasing (softplus), $W_k^{+}\ge0$
elementwise, and the gates $r_k(c)>0$ depend only on the geometry context
$c(g)$: each $z_k$ is convex in $U$ (positive combinations and nonnegative
weighted sums of convex functions composed with convex nondecreasing maps
remain convex; affine-in-$U$ terms with $g$-dependent coefficients preserve
convexity). The output layer repeats the pattern, and an affine term in $U$
leaves the Hessian unchanged. The gauge term contributes exactly
$\lambda(I-P_c)$. Summing,
$\nabla^2_U E\succeq\lambda(I-P_c)$ for all $(g,U)$.
With constants pinned as in Theorem~\ref{thm:global}, the assembled tangent
is uniformly positive definite along the Newton path, so the iteration is
well posed; for exactly quadratic energies it terminates in one step (two
including the verification step, matching the observed counts).
\hfill$\square$

\subsection{Theorem~\ref{thm:t5}: geometry generalization}

Let $g^\star\in\mathcal G_{\mathrm{train}}$ attain
$d:=\min_{g'\in\mathcal G_{\mathrm{train}}}\|g_{\mathrm{test}}-g'\|$ and
assume $\|\nabla_g E_\theta\|\le L_\theta$ and
$\|\nabla_g E_{\mathrm{FEM}}\|\le L_F$ on the segment between $g^\star$ and
$g_{\mathrm{test}}$ (smooth encoding $[a,b,\cos2\theta,\sin2\theta]$ and the
smooth dependence of the condensed operator on the hole parameters give
finite constants; the mesh-noise caveat of Section~\ref{sec:method} enters
$\epsilon$, not the Lipschitz constants). Then the energy gap at
$g_{\mathrm{test}}$ obeys
$\epsilon(g_{\mathrm{test}})\le\epsilon(g^\star)+(L_\theta+L_F)\,d$,
and inserting this into Theorem~\ref{thm:t1},
\[
\|U^*_\theta-U^*\|^2(g_{\mathrm{test}})
\;\le\; \frac{8\epsilon}{\mu} + \frac{8(L_\theta+L_F)\,d}{\mu},
\]
which is the linear-in-$d$ form stated in Theorem~\ref{thm:t5} and matches
the observed exponent $\approx0.9$.
\hfill$\square$

\subsection{Remark~\ref{rem:nullspace}: the regularization floor}

Let the physics energy satisfy $\nabla^2 E\,\mathbf 1=0$ (constant-mode
nullspace) and let the model class consist of
$\tilde E_\theta + \tfrac\lambda2\|U\|^2$ with $\tilde E_\theta$ convex.
Along the normalized constant coordinate $c=\mathbf 1^\top U/\sqrt n$ the
model's second derivative is at least $\lambda$ while the target's is $0$,
so the gap function $\gamma(U)=E_\theta(U)-E(U)$ restricted to that
coordinate is convex with curvature $\ge\lambda$: for any fit,
$\gamma(c)\ge\tfrac\lambda2c^2+\beta c+\alpha$ pointwise for some affine
part which training may choose freely. The mean-squared energy error is
therefore bounded below by
$\min_{\alpha,\beta}\ \mathbb E\big[(\tfrac\lambda2c^2+\beta c+\alpha)^2\big]$
over the data distribution of $c$---a computable quantity. In our linear
benchmark the trained full-Tikhonov models sit on this floor to within
measurement accuracy for $\lambda\in\{10^{-3},10^{-2},10^{-1}\}$, and the
floor translates into assembly errors of 14\%, 39\%, and 224\% respectively,
while the gauge regularizer ($\lambda(I-P_c)$, whose nullspace contains
$\mathbf 1$) removes the floor at identical $\lambda$.
\hfill$\square$

\section{Training and evaluation details}\label{app:repro}

All models train in double precision with Adam (cosine-annealed learning
rate) and, where noted, an L-BFGS polish; random seeds are fixed (0 for
model initialization, enumerated seeds for data generation and evaluation).
All experiments run on a single consumer GPU (NVIDIA GeForce RTX 5090);
operator-regression training of the main heat element completes in
$65$\,s of wall-clock (20k Adam steps plus the L-BFGS polish; the larger
elasticity element takes $298$\,s), and generating the 460
condensed-stiffness targets takes under one minute per physics with mesh
caching---the amortization denominator for the $175\times$ setup claim is
therefore minutes of one-time training, not hours.
The linear element's hypernetwork maps
$[a,b,\cos2\theta,\sin2\theta]\to L(g)\in\R^{64\times64}$ through a
$4\to256\to256\to4096$ tanh MLP (1.1M parameters); operator-regression
pretraining runs 20k steps at $2\times10^{-3}$ plus 400 L-BFGS iterations
on 400 training geometries. The elasticity element
(Section~\ref{sec:elast}) uses the identical recipe with
$4\to384\to384\to16384$ (6.5M parameters), targets
$M_g=\mathrm{sqrtm}(S_g-\lambda(I-P_{\mathrm{rigid}}))$ with
$P_{\mathrm{rigid}}$ the orthogonal projector onto the three rigid modes
of the boundary-node set, $\lambda=5\times10^{-4}$ (pool minimum
$\lambda_4=7\times10^{-4}$), and reaches matrix MSE $7.5\times10^{-6}$
(relative $1.0\times10^{-3}$) on held-out geometries; the gauge$_2$ and
full-Tikhonov ablations differ only in the projector.
The type-B element (Section~\ref{sec:hetero}) reuses the heat recipe
verbatim on features $[r_1,r_2,\cos\varphi,\sin\varphi]$ ($19$\,s target
generation, $59$\,s training, relative test MSE $2.0\times10^{-4}$). The
3D element (Section~\ref{sec:3d}) uses $3\to256\to256\to386^2$ (38.4M
parameters), $\lambda=0.002$, 400 targets generated in $114$\,s
($\sim$$0.3$\,s per 3D mesh-and-condense) and $1{,}790$\,s of Adam
(relative test MSE $7.7\times10^{-3}$; the assembled error of
$0.23\%$ shows this operator accuracy is already assembly-grade for the
coarse 3D discretization). The energy-regression ablation trains the
identical architecture plus a PICNN correction (width 64) for 30k
minibatch steps (batch 8192) on 24{,}000 exact energy samples. Evaluation
uses geometries excluded from training throughout; the exact-Schur truth is
recomputed per case. Nonlinear experiments use $\beta=100$, tile meshes of
$\sim$600 nodes, and interior condensation solved to $10^{-12}$.

\subsection{Negative-result log: learning the nonlinear correction}
\label{app:negative}

\begin{table}[t]
\centering\footnotesize
\caption{Learning the nonlinear correction $R$: four full-space attempts
(v1--v3 and control) fail against the no-learning base, while the
reduced-basis run (v4) partially succeeds. ``Fit'' is the manifold energy
MSE.}
\label{tab:negative}
\setlength{\tabcolsep}{3pt}
\resizebox{\columnwidth}{!}{
\begin{tabular}{llrr}
\toprule
Attempt & Design & Fit & Assembly \\
\midrule
v1 & ICNN on total energy, variance-normalized loss & 14.6 & 28.1\% \\
v1q & quadratic-only control of v1 & 145 & 26.6\% \\
v2 & physics $+$ ICNN remainder, total-$E$ weights & 10.9 & 53.5\% \\
v3 & v2 with remainder-targeted relative weights & 9.6 & 27.9\% \\
v4 & reduced-basis ICNN ($r{=}18$), gradient-first & n/a$^\dagger$ & \textbf{4.9\%} \\
\midrule
\multicolumn{2}{l}{physics only (no learning), for reference} & 15.1 & 14.6\% \\
\bottomrule
\end{tabular}}
\end{table}

{\small $^\dagger$v4's gradient-first stage leaves the energy level
unconstrained, so its value-fit MSE is not comparable; assembly uses only
its derivatives.}

Diagnosis: the remainder is convex but with near-singular anisotropic
curvature ($\lambda_{\min}(H_F-S)\approx0$ at sampled states); smooth
input-convex networks fit its energy values without capturing its gradient
field to force-balance accuracy, and gradient errors enter the assembled
residual directly. Note v2/v3 \emph{improve} the energy fit over the
physics-only reference while \emph{degrading} the assembled solution---a
compact demonstration that energy-value accuracy does not imply
assembly-grade derivative accuracy, the same lesson as
Section~\ref{sec:failure} in a harder setting. v4 supports the same
diagnosis from the positive side: the curvature that matters is
concentrated in the top eigendirections of the remainder-gradient
covariance ($r=18$ directions carry $99.9\%$ of its variance), and
restricting the network to exactly those directions---with gradient-first
training---turns a $1.9\times$ \emph{worsening} (v3: $27.9\%$ vs.\ the
$14.6\%$ base) into a $3.0\times$ improvement ($4.9\%$). It still trails
the training-free physics ladder ($1.17\%$ at $k=1$), and its checkpoint
behavior repeats the value-versus-derivative tension: longer joint
training improves the test gradient loss while degrading assembly
($6.0\%$ at $40$k steps vs.\ $4.9\%$ at $20$k).

\subsection{Boundary-data distance (used in the BC-robustness experiment)}

Under Theorem~\ref{thm:t1}'s assumptions plus Lipschitz dependence of both
energies on the boundary data $b$ with constant $L_b$, the solution map
$b\mapsto U^*(b)$ of a $\mu$-strongly convex energy is $L_b/\mu$-Lipschitz,
and a triangle argument through the nearest training boundary datum $b^\star$
gives
\[
\|U^*_\theta(b)-U^*(b)\|^2 \;\le\; \frac{8\epsilon}{\mu}
+ \frac{C\,L_b^2}{\mu^2}\,\|b-b^\star\|^2 .
\]
For the quadratic-form element the dependence on $b$ enters only through the
assembled linear system, which is why measured errors are insensitive to the
boundary-data distribution (Section~\ref{sec:experiments}): the element
stores the operator, not a response surface over a data distribution.
\hfill$\square$

\section*{CRediT authorship contribution statement}
\textbf{Hongyue Jiang:} Methodology, Software, Formal analysis, Investigation,
Data curation, Validation, Visualization, Writing -- original draft.
\textbf{Jianjiang Zhan:} Investigation, Formal analysis, Validation, Writing
-- review and editing.
\textbf{Chenzhuo Zhang:} Investigation, Validation, Visualization, Writing
-- review and editing.
\textbf{Fan Wang:} Conceptualization, Methodology, Supervision, Project
administration, Funding acquisition, Writing -- review and editing.

\section*{Declaration of competing interest}
The authors declare that they have no known competing financial interests or
personal relationships that could have appeared to influence the work
reported in this paper.

\section*{Funding}
This work was supported by the National Key Research and Development Program
of China (Grant No.~2023YFC3804502); the National Natural Science Foundation
of China (NSFC) (Grant Nos.~72271105 and 52192664); the Fundamental Research
Funds for the Central Universities (Grant No.~2021XXJS079); the research and
development project of China Overseas (COHL-2023-Z-6); and the research and
development project of China State Construction International Holdings
Limited (CSCI-2023-Z-6).

\section*{Data and code availability}
The training and evaluation code, geometry pool, trained checkpoints, and
per-run result files underlying the figures and tables will be made available
at GITHUB-REPOSITORY-URL-TO-BE-ADDED. A permanent archival record will be
provided at ZENODO-DOI-TO-BE-ADDED. The experiment scripts regenerate all
figures from the stored results.

\bibliographystyle{elsarticle-num}

\begin{thebibliography}{10}
\bibitem[Ouyang et~al.(2026)]{ouyang2026noem}
W.~Ouyang, Y.~Shin, S.-W. Liu, and L.~Lu.
\newblock {NOEM}: efficient and scalable finite element method enabled by
  reusable neural operators.
\newblock \emph{Nature Computational Science}, 6(4):417--429, 2026.

\bibitem[Yin et~al.(2024)]{yin2024dimon}
M.~Yin, N.~Charon, R.~Brody, L.~Lu, N.~Trayanova, and M.~Maggioni.
\newblock {DIMON}: Learning solution operators of partial differential
  equations on a diffeomorphic family of domains.
\newblock \emph{Nature Computational Science}, 4:928--940, 2024.

\bibitem[Amos et~al.(2017)]{amos2017icnn}
B.~Amos, L.~Xu, and J.~Z. Kolter.
\newblock Input convex neural networks.
\newblock In \emph{ICML}, 2017.

\bibitem[Amsallem and Farhat(2011)]{amsallem2011online}
D.~Amsallem and C.~Farhat.
\newblock An online method for interpolating linear parametric reduced-order
  models.
\newblock \emph{SIAM Journal on Scientific Computing}, 33(5):2169--2198, 2011.

\bibitem[E and Yu(2018)]{weinan2018deepritz}
W.~E and B.~Yu.
\newblock The deep {R}itz method: a deep learning-based numerical algorithm
  for solving variational problems.
\newblock \emph{Communications in Mathematics and Statistics}, 6:1--12, 2018.

\bibitem[Dondl et~al.(2022)]{muller2022error}
P.~Dondl, J.~M\"uller, and M.~Zeinhofer.
\newblock Uniform convergence guarantees for the deep {R}itz method for
  nonlinear problems.
\newblock \emph{Advances in Continuous and Discrete Models}, 2022.

\bibitem[Eshaghi et~al.(2025)]{eshaghi2024vino}
M.~S. Eshaghi, C.~Anitescu, M.~Thombre, Y.~Wang, X.~Zhuang, and T.~Rabczuk.
\newblock Variational physics-informed neural operator ({VINO}) for solving
  partial differential equations.
\newblock \emph{Computer Methods in Applied Mechanics and Engineering},
  437:117785, 2025.

\bibitem[Wang et~al.(2026)]{wang2026schwarz}
W.~Wang, A.~Gupta, H.~Ruan, and S.~Goswami.
\newblock A non-overlapping {S}chwarz hybrid finite element--neural operator
  framework for solid mechanics on irregular domains.
\newblock \emph{arXiv:2606.08796}, 2026.

\bibitem[Chung et~al.(2026)]{chung2026lsem}
S.~W. Chung, Y.~Choi, C.~Miller, H.~K. Springer, and K.~T. Sullivan.
\newblock Latent space element method.
\newblock \emph{arXiv:2601.01741}, 2026.

\bibitem[Song and Yoon(2026)]{song2026acoustic}
Y.~Song and G.~H. Yoon.
\newblock Efficient acoustic finite element simulation and optimization
  through inverse matrix prediction by neural network.
\newblock \emph{Structural and Multidisciplinary Optimization}, 69:29, 2026.

\bibitem[Guyan(1965)]{guyan1965reduction}
R.~J. Guyan.
\newblock Reduction of stiffness and mass matrices.
\newblock \emph{AIAA Journal}, 3(2):380, 1965.

\bibitem[Craig and Bampton(1968)]{craig1968coupling}
R.~R. Craig and M.~C.~C. Bampton.
\newblock Coupling of substructures for dynamic analyses.
\newblock \emph{AIAA Journal}, 6(7):1313--1319, 1968.

\bibitem[\v{S}kardov\'a et~al.(2024)]{skardova2024fenni}
K.~\v{S}kardov\'a, A.~Daby-Seesaram, and M.~Genet.
\newblock Finite element neural network interpolation: interpretable and
  adaptive discretization for solving {PDE}s.
\newblock \emph{arXiv:2412.05719}, 2024.

\bibitem[Hmida et~al.(2026)]{hmida2026compno}
H.~Hmida, H.~C. Joly, and Y.~Mesri.
\newblock {CompNO}: a foundation model approach for solving partial
  differential equations.
\newblock \emph{arXiv:2601.07384}, 2026.

\bibitem[Lu et~al.(2021)]{lu2021deeponet}
L.~Lu, P.~Jin, G.~Pang, Z.~Zhang, and G.~E. Karniadakis.
\newblock Learning nonlinear operators via {DeepONet} based on the universal
  approximation theorem of operators.
\newblock \emph{Nature Machine Intelligence}, 3(3):218--229, 2021.

\bibitem[Li et~al.(2021)]{li2021fno}
Z.~Li, N.~Kovachki, K.~Azizzadenesheli, B.~Liu, K.~Bhattacharya, A.~Stuart,
  and A.~Anandkumar.
\newblock Fourier neural operator for parametric partial differential
  equations.
\newblock In \emph{International Conference on Learning Representations},
  2021.

\bibitem[Kovachki et~al.(2023)]{kovachki2023neural}
N.~Kovachki, Z.~Li, B.~Liu, K.~Azizzadenesheli, K.~Bhattacharya, A.~Stuart,
  and A.~Anandkumar.
\newblock Neural operator: learning maps between function spaces with
  applications to {PDE}s.
\newblock \emph{Journal of Machine Learning Research}, 24(89):1--97, 2023.

\bibitem[Lanthaler et~al.(2022)]{lanthaler2022error}
S.~Lanthaler, S.~Mishra, and G.~E. Karniadakis.
\newblock Error estimates for {DeepONets}: a deep learning framework in
  infinite dimensions.
\newblock \emph{Transactions of Mathematics and Its Applications},
  6(1):tnac001, 2022.

\bibitem[Lu et~al.(2022)]{lu2022fair}
L.~Lu, X.~Meng, S.~Cai, Z.~Mao, S.~Goswami, Z.~Zhang, and G.~E. Karniadakis.
\newblock A comprehensive and fair comparison of two neural operators (with
  practical extensions) based on {FAIR} data.
\newblock \emph{Computer Methods in Applied Mechanics and Engineering},
  393:114778, 2022.

\bibitem[Li et~al.(2023)]{li2023geofno}
Z.~Li, D.~Z. Huang, B.~Liu, and A.~Anandkumar.
\newblock Fourier neural operator with learned deformations for {PDE}s on
  general geometries.
\newblock \emph{Journal of Machine Learning Research}, 24(388):1--26, 2023.

\bibitem[Pfaff et~al.(2021)]{pfaff2021meshgraphnets}
T.~Pfaff, M.~Fortunato, A.~Sanchez-Gonzalez, and P.~W. Battaglia.
\newblock Learning mesh-based simulation with graph networks.
\newblock In \emph{International Conference on Learning Representations},
  2021.

\bibitem[Raissi et~al.(2019)]{raissi2019pinn}
M.~Raissi, P.~Perdikaris, and G.~E. Karniadakis.
\newblock Physics-informed neural networks: a deep learning framework for
  solving forward and inverse problems involving nonlinear partial
  differential equations.
\newblock \emph{Journal of Computational Physics}, 378:686--707, 2019.

\bibitem[Karniadakis et~al.(2021)]{karniadakis2021piml}
G.~E. Karniadakis, I.~G. Kevrekidis, L.~Lu, P.~Perdikaris, S.~Wang, and
  L.~Yang.
\newblock Physics-informed machine learning.
\newblock \emph{Nature Reviews Physics}, 3:422--440, 2021.

\bibitem[Samaniego et~al.(2020)]{samaniego2020energy}
E.~Samaniego, C.~Anitescu, S.~Goswami, V.~M. Nguyen-Thanh, H.~Guo,
  K.~Hamdia, X.~Zhuang, and T.~Rabczuk.
\newblock An energy approach to the solution of partial differential
  equations in computational mechanics via machine learning: concepts,
  implementation and applications.
\newblock \emph{Computer Methods in Applied Mechanics and Engineering},
  362:112790, 2020.

\bibitem[Kharazmi et~al.(2021)]{kharazmi2021hpvpinn}
E.~Kharazmi, Z.~Zhang, and G.~E. Karniadakis.
\newblock hp-{VPINNs}: variational physics-informed neural networks with
  domain decomposition.
\newblock \emph{Computer Methods in Applied Mechanics and Engineering},
  374:113547, 2021.

\bibitem[Wang et~al.(2021)]{wang2021pideeponet}
S.~Wang, H.~Wang, and P.~Perdikaris.
\newblock Learning the solution operator of parametric partial differential
  equations with physics-informed {DeepONets}.
\newblock \emph{Science Advances}, 7(40):eabi8605, 2021.

\bibitem[Goswami et~al.(2022)]{goswami2022crack}
S.~Goswami, M.~Yin, Y.~Yu, and G.~E. Karniadakis.
\newblock A physics-informed variational {DeepONet} for predicting crack path
  in quasi-brittle materials.
\newblock \emph{Computer Methods in Applied Mechanics and Engineering},
  391:114587, 2022.

\bibitem[Yin et~al.(2022)]{yin2022interfacing}
M.~Yin, E.~Zhang, Y.~Yu, and G.~E. Karniadakis.
\newblock Interfacing finite elements with deep neural operators for fast
  multiscale modeling of mechanics problems.
\newblock \emph{Computer Methods in Applied Mechanics and Engineering},
  402:115027, 2022.

\bibitem[Wang et~al.(2025)]{wang2025timemarching}
W.~Wang, M.~Hakimzadeh, H.~Ruan, and S.~Goswami.
\newblock Time-marching neural operator--{FE} coupling: {AI}-accelerated
  physics modeling.
\newblock \emph{Computer Methods in Applied Mechanics and Engineering},
  446:118319, 2025.

\bibitem[Capuano and Rimoli(2019)]{capuano2019smart}
G.~Capuano and J.~J. Rimoli.
\newblock Smart finite elements: a novel machine learning application.
\newblock \emph{Computer Methods in Applied Mechanics and Engineering},
  345:363--381, 2019.

\bibitem[Koeppe et~al.(2020)]{koeppe2020meta}
A.~Koeppe, F.~Bamer, and B.~Markert.
\newblock An intelligent nonlinear meta element for elastoplastic continua:
  deep learning using a new time-distributed residual {U-Net} architecture.
\newblock \emph{Computer Methods in Applied Mechanics and Engineering},
  366:113088, 2020.

\bibitem[Oishi and Yagawa(2017)]{oishi2017quadrature}
A.~Oishi and G.~Yagawa.
\newblock Computational mechanics enhanced by deep learning.
\newblock \emph{Computer Methods in Applied Mechanics and Engineering},
  327:462--480, 2017.

\bibitem[Saha et~al.(2021)]{saha2021hidenn}
S.~Saha, Z.~Gan, L.~Cheng, J.~Gao, O.~L. Kafka, X.~Xie, H.~Li, M.~Tajdari,
  H.~A. Kim, and W.~K. Liu.
\newblock Hierarchical deep learning neural network ({HiDeNN}): an artificial
  intelligence ({AI}) framework for computational science and engineering.
\newblock \emph{Computer Methods in Applied Mechanics and Engineering},
  373:113452, 2021.

\bibitem[Parish et~al.(2024)]{parish2024spsd}
E.~Parish, P.~Lindsay, T.~Shelton, and J.~Mersch.
\newblock Embedded symmetric positive semi-definite machine-learned elements
  for reduced-order modeling in finite-element simulations with application
  to threaded fasteners.
\newblock \emph{Computational Mechanics}, 74(6), 2024.

\bibitem[Benner et~al.(2015)]{benner2015survey}
P.~Benner, S.~Gugercin, and K.~Willcox.
\newblock A survey of projection-based model reduction methods for parametric
  dynamical systems.
\newblock \emph{SIAM Review}, 57(4):483--531, 2015.

\bibitem[Hesthaven and Ubbiali(2018)]{hesthaven2018nonintrusive}
J.~S. Hesthaven and S.~Ubbiali.
\newblock Non-intrusive reduced order modeling of nonlinear problems using
  neural networks.
\newblock \emph{Journal of Computational Physics}, 363:55--78, 2018.

\bibitem[Lee and Carlberg(2020)]{leecarlberg2020}
K.~Lee and K.~T. Carlberg.
\newblock Model reduction of dynamical systems on nonlinear manifolds using
  deep convolutional autoencoders.
\newblock \emph{Journal of Computational Physics}, 404:108973, 2020.

\bibitem[Feyel and Chaboche(2000)]{feyel2000fe2}
F.~Feyel and J.-L. Chaboche.
\newblock {FE\textsuperscript{2}} multiscale approach for modelling the
  elastoviscoplastic behaviour of long fibre {SiC/Ti} composite materials.
\newblock \emph{Computer Methods in Applied Mechanics and Engineering},
  183(3--4):309--330, 2000.

\bibitem[Ghaboussi et~al.(1991)]{ghaboussi1991}
J.~Ghaboussi, J.~H. Garrett, and X.~Wu.
\newblock Knowledge-based modeling of material behavior with neural networks.
\newblock \emph{Journal of Engineering Mechanics}, 117(1):132--153, 1991.

\bibitem[Le et~al.(2015)]{le2015homogenization}
B.~A. Le, J.~Yvonnet, and Q.-C. He.
\newblock Computational homogenization of nonlinear elastic materials using
  neural networks.
\newblock \emph{International Journal for Numerical Methods in Engineering},
  104(12):1061--1084, 2015.

\bibitem[Kirchdoerfer and Ortiz(2016)]{kirchdoerfer2016datadriven}
T.~Kirchdoerfer and M.~Ortiz.
\newblock Data-driven computational mechanics.
\newblock \emph{Computer Methods in Applied Mechanics and Engineering},
  304:81--101, 2016.

\bibitem[Jagtap and Karniadakis(2020)]{jagtap2020xpinn}
A.~D. Jagtap and G.~E. Karniadakis.
\newblock Extended physics-informed neural networks ({XPINNs}): a generalized
  space-time domain decomposition based deep learning framework for
  nonlinear partial differential equations.
\newblock \emph{Communications in Computational Physics}, 28(5):2002--2041,
  2020.

\bibitem[Moseley et~al.(2023)]{moseley2023fbpinn}
B.~Moseley, A.~Markham, and T.~Nissen-Meyer.
\newblock Finite basis physics-informed neural networks ({FBPINNs}): a
  scalable domain decomposition approach for solving differential equations.
\newblock \emph{Advances in Computational Mathematics}, 49:62, 2023.

\bibitem[Li et~al.(2020)]{li2020d3m}
K.~Li, K.~Tang, T.~Wu, and Q.~Liao.
\newblock {D3M}: a deep domain decomposition method for partial differential
  equations.
\newblock \emph{IEEE Access}, 8:5283--5294, 2020.

\bibitem[Heinlein et~al.(2021)]{heinlein2021review}
A.~Heinlein, A.~Klawonn, M.~Lanser, and J.~Weber.
\newblock Combining machine learning and domain decomposition methods for the
  solution of partial differential equations---a review.
\newblock \emph{GAMM-Mitteilungen}, 44(1):e202100001, 2021.

\bibitem[Klein et~al.(2022)]{klein2022polyconvex}
D.~K. Klein, M.~Fern\'andez, R.~J. Martin, P.~Neff, and O.~Weeger.
\newblock Polyconvex anisotropic hyperelasticity with neural networks.
\newblock \emph{Journal of the Mechanics and Physics of Solids}, 159:104703,
  2022.

\bibitem[As'ad et~al.(2022)]{asad2022icnn}
F.~As'ad, P.~Avery, and C.~Farhat.
\newblock A mechanics-informed artificial neural network approach in
  data-driven constitutive modeling.
\newblock \emph{International Journal for Numerical Methods in Engineering},
  123(12):2738--2759, 2022.

\bibitem[Hughes(2000)]{hughes2000fem}
T.~J.~R. Hughes.
\newblock \emph{The Finite Element Method: Linear Static and Dynamic Finite
  Element Analysis}.
\newblock Dover Publications, Mineola, NY, 2000.

\bibitem[Zienkiewicz et~al.(2013)]{zienkiewicz2013fem}
O.~C. Zienkiewicz, R.~L. Taylor, and J.~Z. Zhu.
\newblock \emph{The Finite Element Method: Its Basis and Fundamentals}.
\newblock Butterworth-Heinemann, Oxford, 7th edition, 2013.

\bibitem[Ouyang et~al.(2026b)]{ouyang2026pile}
W.~Ouyang, S.-W. Liu, and S.-L. Chan.
\newblock Second-order analysis method for pile-supported structures using
  neural-operator element method.
\newblock \emph{Engineering Structures}, 2026.
\newblock doi:10.1016/j.engstruct.2026.122385.

\bibitem[Secchi et~al.(2026)]{secchi2026nest}
P.~Secchi, D.~S. Balint, and M.~Maurizi.
\newblock Neural-{S}chwarz tiling for geometry-universal {PDE} solving at
  scale.
\newblock arXiv:2605.12343, 2026.

\bibitem[Wu et~al.(2026)]{wu2026dfenn}
C.~Wu, C.~Liu, Y.~Guo, and X.~Guo.
\newblock {DFENN}: a penalty-free variational framework coupling finite
  elements and neural networks via interface condensation.
\newblock \emph{Journal of the Mechanics and Physics of Solids}, 2026.
\newblock doi:10.1016/j.jmps.2026.106703.

\bibitem[Pundir et~al.(2026)]{pundir2026tatva}
M.~Pundir, F.~Lorez, and D.~S. Kammer.
\newblock A versatile {FEM} framework with native {GPU} scalability via
  globally-applied {AD} (tatva).
\newblock arXiv:2602.12365, 2026.

\bibitem[Shaffer et~al.(2026)]{shaffer2026geonew}
B.~D. Shaffer, S.~Koohy, B.~Kinch, M.~A. Hsieh, and N.~Trask.
\newblock Structure-preserving learning improves geometry generalization in
  neural {PDE}s ({G}eo-{N}e{W}).
\newblock arXiv:2602.02788, 2026.

\end{thebibliography}

\end{document}